\renewenvironment{proof}{{\bf \emph{Proof.} }}{\hfill $\blacksquare$ \\}
\definecolor{blue_cblind}{HTML}{1A85FF}
\definecolor{blue_cblind}{HTML}{1485FF}
\newcommand{\DoVerifier}{{\textsc{DoVerifier}}\xspace}
\title{Uncovering Hidden Correctness in LLM Causal Reasoning\\ via Symbolic Verification}
\author{
 Paul He$^{1, 2}$\thanks{This work was done while the author was at the University of Toronto and the Vector Institute; the author is now at NTU Singapore.}  \quad Yinya Huang$^{3,5}$ \quad
  {\bf Mrinmaya Sachan}$^{3,5}$ \quad {\bf Zhijing Jin}$^{1, 2, 4}$ \\
  $^{1}$University of Toronto {}{}~ $^{2}$Vector Institute {}{}~ $^{3}$ETH Zürich{}{}~ \\$^{4}$MPI for Intelligent Systems {}{}~ $^{5}$ETH AI Center \\
  \texttt{\{\href{mailto:hepaul@cs.toronto.edu}{hepaul},\href{mailto:zjin@cs.toronto.edu}{zjin}\}@cs.toronto.edu} \\
   \texttt{\{\href{mailto:yinya.huang@inf.ethz.ch}{yingya.huang},\href{mailto:mrinmaya.sachan@inf.ethz.ch}{mrinmaya.sachan}\}@inf.ethz.ch}\\
}
\begin{document}
\maketitle

\begin{abstract}
Large language models (LLMs) are increasingly being applied to tasks that involve causal reasoning. However, current benchmarks often rely on string matching or surface-level metrics that do not capture whether the output of a model is formally valid under the semantics of causal reasoning. To address this, we propose \DoVerifier, a simple symbolic verifier that checks whether LLM-generated causal expressions are derivable from a given causal graph using rules from do-calculus and probability theory. This allows us to recover correct answers to causal queries that would otherwise be marked incorrect due to superficial differences in their causal semantics. Our evaluations on synthetic data and causal QA benchmarks show that \DoVerifier more accurately captures semantic correctness of causal reasoning traces, offering a more rigorous and informative way to evaluate LLMs on causal reasoning.

\href{https://github.com/Hepaul7/doverifier}{\faGithub\ \texttt{https://github.com/Hepaul7/doverifier}}
\end{abstract}

\section{Introduction}
Causal reasoning lies at the core of human intelligence. Unlike mere pattern recognition, it enables us to reason about interventions, explain effects, and predict outcomes under hypothetical scenarios. As large language models (LLMs)~\cite{openai2024gpt4o,gemmateam2025gemma3technicalreport,deepseekai2025deepseekr1incentivizingreasoningcapability} are increasingly being deployed in scientific, medical, and policy-related domains, their ability to generate and interpret causal claims is no longer optional—it is critical \cite{doshivelez2017rigorousscienceinterpretablemachine}. An LLM that can distinguish between correlation and causation could support tasks ranging from experimental design to scientific hypothesis generation.

Recent causal reasoning benchmarks such as CLadder~\cite{cladder} and CausalBench~\cite{wang2024causalbench} have begun to evaluate LLMs on causal question answering. However, these efforts primarily focus on surface-level correctness: whether the model's answer matches a gold string or produces the right outcome in simple scenarios. \begin{figure}
    \centering
    \includegraphics[width=1\linewidth]{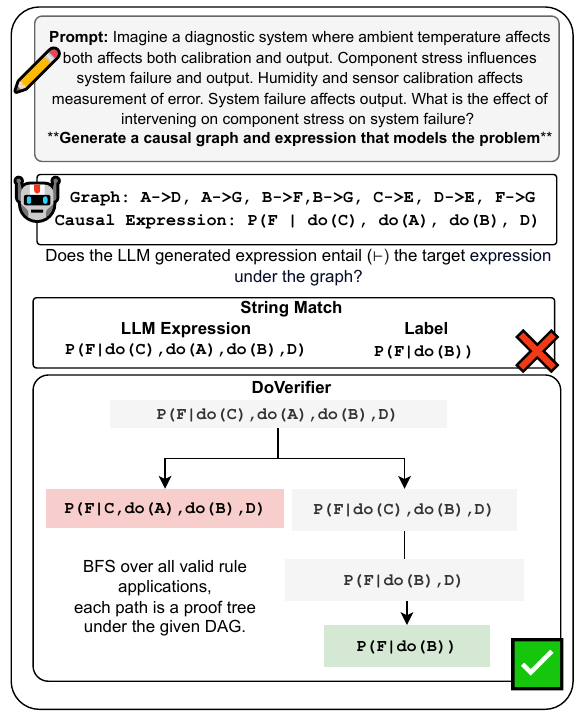}
    \caption{ Our symbolic verifier checks whether a model-generated causal expression is semantically equivalent to the ground truth under a given DAG. Unlike string match, it explores all valid derivations using do-calculus and probability rules to identify formal equivalence.}
    \label{fig:overview}
\end{figure} While useful, these metrics fail to capture a more fundamental question: \emph{does the model's output represent a valid causal expression under formal semantics?} 

Furthermore, LLMs often produce expressions that are logically correct but syntactically different from the reference. These answers are penalized in the evaluations despite being correct, leading to an incomplete picture of model capabilities. Current causal reasoning benchmarks systematically penalize models that produce semantically valid causal expressions while rewarding outputs that are syntactically similar but causally incorrect.

This gap arises because causal inference relies on symbols: the validity of an expression like $P(Y \mid \doC(X))$ depends not on its string form, but on whether it is derivable from a given causal graph using rules of do-calculus and probability theory. In mathematical formalization tasks, models can often be evaluated by plugging in values or checking numerical correctness~\cite{gao2025omnimath,fan2024hardmath,GSM8K,MAT2021}. However, as shown in Figure~\ref{fig:overview}, we rarely know the full joint distribution $P(\cdot)$ in causal reasoning queries, preventing us from substituting numerical values; the ground truth is defined not by observed values, but by derivability under a causal graph using the rules of do-calculus~\cite{10.1093/biomet/82.4.669}. %
As a result, causal expressions %
must be evaluated based on their formal relationship to a directed acyclic graph (DAG) and other expressions, rather than by simulation or numerical substitution.

In this work, we propose \DoVerifier, a symbolic verifier to evaluate the equivalence of causal expressions generated by LLM. Given a causal graph and model generations, our system determines whether a generated causal expression is formally derivable from another using the set of known rules. This allows us to recover semantically equivalent and correct outputs that existing benchmarks miss. \Cref{fig:overview} illustrates an example for the core failure mode we address: causal correctness depends on whether an expression is derivable under a causal graph using do-calculus, not on its surface form. %

Our contributions are as follows:
\begin{itemize}
    \setlength\itemsep{.05em}
    
    \item We propose a formal verification framework for LLM-generated causal expressions, based on proof search over do-calculus and probability transformations.

    \item We show that \DoVerifier recovers a large portion of causally correct but syntactically mismatched outputs on both synthetic data and real benchmarks, outperforming standard evaluation metrics used in causal reasoning.
    
    \item We demonstrate that symbolic verification enables feedback for model self-correction, improving causal accuracy without supervision.
\end{itemize}
Our approach builds on a long tradition of symbolic verification in logic and mathematics, and adapts these techniques to causal reasoning, where correctness is defined by do-calculus and DAG semantics.

\section{Related Work}
\begin{figure*}
    \centering
    \includegraphics[width=1\linewidth]{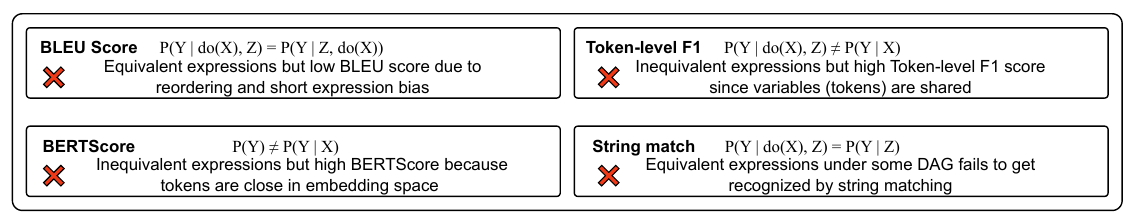}
    \caption{\textcolor{black}{Examples of evaluation failures in causal expression matching. Even logically equivalent expressions can receive low scores due to surface-level differences (e.g., reordering), while inequivalent ones may score high due to shared tokens or embeddings. Highlights the limitations of BLEU, token-level F1, BERTScore, and string match in causal reasoning tasks.}}
    \label{fig:failures}
\end{figure*}
\paragraph{Causal QA and LLM Evaluation}
Recent benchmarks evaluate large language models (LLMs) on their ability to answer causal questions expressed in natural language. CLadder \cite{cladder} and CausalBench \cite{wang2024causalbench} present standardized datasets of associational, interventional, and counterfactual queries grounded in causal graphs. However, evaluation typically hinges on string similarity to a gold-standard answer, without any guarantee of \textit{causal validity} \cite{cladder, bondarenko-etal-2022-causalqa,joshi-etal-2024-llms}.  Standard metrics like exact match, BLEU~\cite{papineni-etal-2002-bleu}, token-level F1, and BERTScore~\cite{zhang2020bertscoreevaluatingtextgeneration:} are commonly used to evaluate LLMs on causal QA tasks~\cite{hu2024unveilingllmevaluationfocused}. However, these metrics assess surface similarity, not semantic equivalence. Semantically incorrect expressions might score well due to shared tokens. As shown in \Cref{fig:failures}, they may penalize logically correct outputs due to formatting differences, or falsely reward incorrect answers that share common tokens. To our knowledge, no prior work evaluates causal QA using symbolic derivability as a criterion for semantic correctness.

\paragraph{Formal Verification in Causal Inference}
The causal inference community has long relied on do-calculus \cite{10.1093/biomet/82.4.669} and probability theory to determine whether a causal query is identifiable from observational data. Classical identifiability algorithms \cite{JMLR:v9:shpitser08a} and modern tools like \texttt{dosearch} \cite{dosearch} formalize this process as a search over valid derivations. However, these tools are designed to compute causal effects from structured inputs, not to verify whether a model-generated expression is valid or equivalent under the causal graph. This verification step is critical when evaluating the intermediate reasoning of LLMs.
Another line of work, like \citet{sheth-etal-2025-causalgraph2llm}, checks if answers align with predefined causal graphs but relies on template matching rather than formal derivations and cannot handle expressions involving do-calculus transformations.
In contrast, our approach treats verification as a symbolic proof search problem: given a model-generated expression, we check whether it can be derived from known assumptions using formally defined rules. This enables both robust evaluation and fine-grained error analysis.

\paragraph{Formalization in Mathematical and Logical Reasoning}
Efforts in mathematical reasoning have primarily focused on verifying answers to quantitative problems. 
For instance, \citet{MAT2021} evaluates LLMs on math competition problems, while \citet{frontier2024} investigates symbolic solvers for arithmetic tasks. 
To further validate intermediate reasoning steps, another line of work~\cite{ren2025deepseekproverv2,wang2024legoprover} resorts to formal math descriptions~\cite{Lean4,Isabelle} that facilitate the step-wise consistency inspection. 
Although it is promising to \textit{formalize} a math problem~\cite{alphaproof,goedelprover}, checking its semantic correctness is found crucial yet under evolving~\cite{FormalAlign,xin2025apebenchi}.
Recent work in geometry~\cite{murphy2024autoformalizingeuclideangeometry} and logic~\cite{li2024autoformalizemathematicalstatementssymbolic} uses satisfiability modulo theories (SMT) solvers to assess logical equivalence between informal text and formal theorems. 

\section{\DoVerifier: {Causal Symbolic Verification Framework}}
\subsection{Preliminaries}
\paragraph{Notation}
Let $G = (V, E)$ denote a causal directed acyclic graph (DAG), where $V$ is a finite set of variables and $E \subseteq V \times V$ is the set of directed edges. We treat variables in $V$ as symbolic nodes in the causal graph, without assuming they are necessarily directly observed in data. 

We define the language $\mathcal{L}_{\text{causal}}$ as the set of expressions of the form $P(Y \mid \mathbf{Z})$, where $Y \subseteq V$ and $\mathbf{Z} \subseteq V \cup \{\mathrm{do}(X) : X \subseteq V\}$. Throughout the paper, we use Greek letters (e.g., $\phi, \psi, \chi$) to denote elements of $\mathcal{L}_{\text{causal}}$, that is, symbolic causal expressions.

Here, $Y$ denotes the outcome variable set, while the conditioning set $\mathbf{Z}$ may include both observational variables and interventional terms of the form $\mathrm{do}(X)$.

We write $\phi \vdash_G \psi$ to denote that $\psi$ is derivable from $\phi$ via a finite sequence of applications of do-calculus and standard probability rules, while respecting the conditional independencies entailed by $G$. The graph $G$ encodes the causal structure, which governs valid derivations and conditional independencies. We read $\vdash$ as \textbf{entails}.

\paragraph{Problem Statement}
Given a causal DAG $G \in \mathcal{G}$ and two causal expressions 
$\phi, \psi \in \mathcal{L}_{\text{causal}}$, 
we aim to design a model
\begin{align*}
\mathcal{F} :
\mathcal{L}_{\text{causal}}
\times \mathcal{L}_{\text{causal}}
\times \mathcal{G}
\rightarrow [0,1],
\end{align*}
that determines whether $\phi$ entails $\psi$ under $G$, 
where $\mathcal{G}$ denotes the space of causal DAGs. 
Formally, this corresponds to checking whether $\phi \vdash_G \psi$ holds.

\begin{tcolorbox}[
  colback=white,
  colframe=gray,
  boxrule=0.4pt,
  arc=2pt,
  left=6pt,
  right=6pt,
  top=6pt,
  bottom=6pt,
  breakable,
  title={Motivating Example: Hidden Correctness},
  fonttitle=\bfseries
]
From \Cref{fig:overview}, let
\begin{align*}
    \phi_{\text{pred}} &= P(F \mid \doC(C), \doC(A), \doC(B), D), \\ \psi &= P(F \mid \doC(B)),
\end{align*}

and let $G$ be the following DAG:
\begin{center}
\begin{tikzpicture}[
  >=Stealth,
  every node/.style={inner sep=2pt, font=\normalsize},
  arrow/.style={->, thick},
]

\node (B) at (-3,1) {B};
\node (F) at (-1.8,1) {F};
\node (G) at (-1.2,0) {G};

\node (A) at (0,2) {A};
\node (D) at (1.6,2) {D};
\node (C) at (1.6,0.8) {C};
\node (E) at (3.2,0) {E};

\draw[arrow] (B) -- (F);
\draw[arrow] (B) -- (G);
\draw[arrow] (F) -- (G);

\draw[arrow] (A) -- (D);
\draw[arrow] (A) -- (G);
\draw[arrow] (D) -- (E);
\draw[arrow] (C) -- (E);

\end{tikzpicture}
\end{center}
If $\mathcal{F}$ relies on exact match, then $\mathcal{F}(\phi_{\text{pred}}, \psi, G) = 0$. However, this is incorrect, since $\phi_{\text{pred}} \vdash_G \psi$ holds: under $G$, $F$ is unaffected by interventions on $A$ and $C$ and is independent of $D$ under $\doC(B)$. Our goal is to design a verification-based evaluator that captures such equivalences more accurately.
\end{tcolorbox}

To determine whether $\phi \vdash_G \psi$ holds, we rely on the rules of do-calculus and standard probability theory~\cite{10.1093/biomet/82.4.669}. These rules define how causal expressions in $\mathcal{L}_{\text{causal}}$ can be transformed while preserving validity under a given causal graph $G$. Because causal expressions may involve interventions (via $\doC(\cdot)$), entailment cannot be determined by surface-level syntactic matching alone, but depends on the structure of $G$ and the conditional independencies it encodes. Do-calculus provides a sound and complete set of transformation rules for this purpose, and forms the basis of our verification framework. We now introduce the three do-calculus rules that underpin \DoVerifier and specify when terms can be added to or removed from interventional distributions.

\paragraph{The Rules of $do$-calculus}
 Let $X, Y, Z$, and $W$ be arbitrary disjoint sets of nodes in a causal directed acyclic graph (DAG) $G$ \footnote{In do-calculus, $X$, $Y$, $Z$, and $W$ are disjoint sets of variables representing interventions ($X$), outcomes ($Y$), observed variables ($Z$), and other variables ($W$). These sets can be empty, allowing the rules to generalize to many causal inference scenarios.}. Following the notation of \citet{pearl2012docalculusrevisited}, we denote:
 \begin{itemize}
     \item  $G_{\overline{X}}$ the graph obtained from $G$ by removing all the edges pointing to the nodes in $X$.
     \item  $G_{\underline{X}}$ the graph obtained by deleting all the edges emerging from the nodes in $X$.
     \item $G_{\overline{X}\underline{Z}}$ the graph obtained by deleting edges into $X$ and out of $Z$.
 \end{itemize}
Each rule applies only if a certain $d$-separation condition holds in the modified graph.
\begin{description}
    \item[Rule 1] (Insertion/deletion of observations): 
    \begin{align}
        \label{eq:rule1}
        P(y\mid\text{do}(x), z, w) = P(y\mid \text{do}(x), w) \nonumber\\  \quad \text{if } (Y \perp\!\!\!\perp Z\mid X, W)_{G_{\overline{X}}}
    \end{align}
    This allows us to add or remove observed variables $Z$ from the conditioning set if they are irrelevant to $Y$ once $X$ and $W$ are known (after intervention $X$).
    \item[Rule 2] (Action/observation exchange):
    \begin{align}
        \label{eq:rule2}
        P(y\mid\text{do}(x), \text{do}(z), w) = P(y\mid \text{do}(x),z, w) \nonumber\\  \quad \text{if } (Y \perp\!\!\!\perp Z\mid X, W)_{G_{\overline{X}\underline{Z}}}
    \end{align}
    This allows us to replace an intervention $\doC(Z)$ with a simple observation, if $Z$ behaves like a non-manipulated variable under this graphical condition.
    \item[Rule 3] (Insertion/deletion of actions):
    \begin{align}
        \label{eq:rule3}
        P(y\mid\text{do}(x), \text{do}(z), w) = P(y\mid \text{do}(x), w) \nonumber\\  \quad \text{if } (Y \perp\!\!\!\perp Z\mid X, W)_{G_{\overline{X}\,\overline{Z(W)}}}
    \end{align}
    This allows us to ignore an intervention on $Z$ when it has no causal effect on $Y$, given the rest of the variables.
\end{description}
\textbf{Notation:} 
$Z(W)$ denotes the subset of nodes in $Z$ that are \textit{not} ancestors of any node in $W$ in the graph $G_{\overline{X}}$. 
We use $G_{\overline{X}\,\overline{Z(W)}}$ to denote the graph obtained by
removing incoming edges to $X$ and to $Z(W)$.
This restriction ensures that we remove only those do-interventions that do not ``leak'' back into parts of the graph relevant to $W$. The notation $(Y \perp\!\!\!\perp Z \mid X, W)_G$ represents $d$-separation in graph $G$, meaning that all paths between $Y$ and $Z$ are blocked by conditioning on $X$ and $W$.

\subsection{Method}

We define a symbolic verifier, \DoVerifier, for assessing equivalence between causal expressions derived from natural language. We first outline a set of desired properties for causal evaluation metrics in \Cref{appendix:desired_properties}, and later show why existing metrics fail to satisfy them. Given a DAG $G$ (which may be generated by the model) and two expressions 
$\phi_{\text{pred}}, \psi \in \mathcal{L}_{\text{causal}}$, 
\DoVerifier performs a breadth-first search (BFS) starting from 
$\phi_{\text{pred}}$ over all expressions reachable by applying the rules 
in $\mathcal{R}$.

In the resulting \emph{derivation graph}, each node corresponds to a
well-formed causal expression, and each directed edge corresponds to a valid
rule application. All transformation rules in $\mathcal{R}$ are unary rewrites
mapping a single causal expression to a single expression. Consequently,
derivations correspond to linear sequences rather than branching proof trees.
A sequence of rule applications
\begin{align}
\phi_{\text{pred}}
  \xrightarrow{R_{i_1}}
  \phi_1
  \xrightarrow{R_{i_2}}
  \dots
  \xrightarrow{R_{i_k}}
  \phi_k
\end{align}
represents a single path in this graph.
If $\phi_k = \psi$, the verifier declares equivalence; otherwise, the search continues until no new expressions can be generated under the given rules and depth limit, or until a previously visited expression is encountered. Because each rule in $\mathcal{R}$ corresponds to a sound transformation under the causal semantics of $G$, any successful derivation certifies a valid equivalence. Moreover, since the search systematically explores all possible rule applications without repetition, it is complete for a finite $\mathcal{R}$ and variable set. Details of the preprocessing and normalization steps, as well as implementation specifics, are provided in \Cref{appendix:details}.

\subsection{Theoretical Guarantees of \DoVerifier}
In this section, we establish the theoretical foundations of \DoVerifier. We first analyze the soundness and completeness of the verification algorithm, and then show that derivability in our system corresponds exactly to semantic correctness under causal semantics. We begin by introducing formal notations that support these proofs. These guarantees ensure that verified expressions are not merely heuristically plausible, but formally correct under causal semantics. Soundness ensures that every equivalence established by \DoVerifier is valid; completeness ensures that if an equivalence exists between two expressions, \DoVerifier will find it. First,
we model causal equivalence as a reachability problem in a derivation graph:
\begin{restatable}[Derivation Graph]{thm}{soundness}
\label{thm:graph}
Let $\phi \in \mathcal{L}_{\text{causal}}$. Define a directed graph 
$S(\phi)$ as follows:
\begin{itemize}
    \item Each node corresponds to a unique causal expression derivable from 
    $\phi$;
    \item A directed edge $\phi \to \phi'$ exists if $\phi'$ can be 
    obtained from $\phi$ by applying a single valid transformation rule.
\end{itemize}
Then $S(\phi)$ is a well-defined, finite-branching graph.
\end{restatable}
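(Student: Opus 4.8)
The plan is to verify the two assertions separately: that the node set of $S(\phi)$ is \emph{well-defined}, and that every node has \emph{finite out-degree}. The overall strategy reduces both claims to two elementary facts: (i) because $V$ is finite, the language $\mathcal{L}_{\text{causal}}$ contains only finitely many distinct expressions over $V$; and (ii) each transformation rule in $\mathcal{R}$ is a unary rewrite whose applicability is governed by a decidable $d$-separation side-condition on a fixed finite DAG. I would open by fixing a canonical representation of expressions, so that the set-valued components are compared as sets rather than as strings.

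For well-definedness, I would argue that an expression $P(Y \mid \mathbf{Z}) \in \mathcal{L}_{\text{causal}}$ is determined entirely by the outcome set $Y \subseteq V$ and the conditioning set $\mathbf{Z} \subseteq V \cup \{\mathrm{do}(X) : X \subseteq V\}$, both taken as unordered sets. After the normalization step (e.g., sorting set elements into a fixed canonical order and deduplicating), two representations denote the same node if and only if their normalized forms coincide syntactically. This makes ``node $=$ normalized derivable expression'' a genuine function rather than a one-to-many assignment, so that expressions reached along different derivation paths collapse to a single node. Combined with the fact that $\vdash_G$ is itself a well-defined relation, the prescription ``nodes are exactly the normalized expressions derivable from $\phi$'' then specifies a unique, well-defined set.

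For finite-branching, I would fix an arbitrary node $\phi'$ and bound its out-degree directly. Each outgoing edge corresponds to a single application of one of the finitely many rules in $\mathcal{R}$. Every such application is parameterized by a choice of the disjoint subsets $X, Y, Z, W \subseteq V$ to which the rule instantiates; since $V$ is finite there are at most finitely many such tuples. For each tuple the rule fires only when its $d$-separation condition holds in the corresponding mutilated graph ($G_{\overline{X}}$, $G_{\overline{X}\underline{Z}}$, or $G_{\overline{X}\,\overline{Z(W)}}$), a property that is decidable on the finite graph $G$; and when it fires it yields exactly one successor expression, because the rules are unary rewrites. Hence the number of successors of $\phi'$ is bounded by $|\mathcal{R}|$ times the number of admissible tuples, which is finite, so $\phi'$ has finite out-degree. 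As the same bound holds uniformly over all nodes, $S(\phi)$ is finite-branching; indeed, since the whole of $\mathcal{L}_{\text{causal}}$ over $V$ is finite, $S(\phi)$ is in fact a finite graph.

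The only genuine subtlety --- and the step I would treat most carefully --- is the well-definedness of node identity, not the branching bound. If expressions were compared as raw strings, the same semantic object (e.g., $P(Y \mid \mathrm{do}(X), W)$ with $W$ written in two different orders) would spawn distinct nodes, and the search would fail to recognize revisits, breaking both the ``one expression per node'' claim and the finiteness of $S(\phi)$. The crux is therefore to justify that the normalization procedure produces a canonical form stable under reordering and duplication, so that syntactic equality of normalized forms faithfully captures equality of the underlying sets; the finite-branching claim then follows as a routine counting argument over subsets of the finite vertex set $V$.
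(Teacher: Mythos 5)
Your proposal is correct and follows essentially the same route as the paper's proof: both reduce well-definedness to the fact that expressions are kept in canonical normalized form (so the rules act as well-defined partial functions on a finite $\mathcal{L}_{\text{causal}}$), and both bound the out-degree by noting that the finitely many rules in $\mathcal{R}$ instantiate over subsets of the finite variable set $V$, yielding finitely many successors per node --- and, as you observe and the paper's finiteness of $\mathcal{L}_{\text{causal}}$ likewise implies, $S(\phi)$ is in fact a finite graph. Your added emphasis on why string-level comparison without canonicalization would break node identity is a sound elaboration of a point the paper handles implicitly, not a departure in method.
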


The branching factor is finite since the number of variables in $G$ is finite 
and each transformation rule applies only to bounded subsets of variables.

\begin{proof}
Proved in \Cref{appendix:proof1}.
\end{proof}
\paragraph{Verification Algorithm}
Given a causal graph $G$, source expression $\phi$, target expression $\psi$, and maximum depth $d$, we present \Cref{algo:verification} as an algorithm to verify if $\phi$ and $\psi$ are equivalent bounded by depth $d$.

This approach guarantees finding the shortest sequence of transformations if one exists within the depth limit, as stated in our main theorem that concerns the soundness and completeness of the {verification algorithm}:

\begin{restatable}[Soundness \& Completeness of Proof Search]{thm}{soundness}
\label{thm:soundness}
Let $G$ be a causal DAG, and let $ \phi, \psi \in \mathcal{L}_{\text{causal}} $.
If $ \phi \vdash_G \psi $, then \Cref{algo:verification} returns a valid proof sequence within 
depth $d$, for some finite $d$. Conversely, if no such derivation exists within 
depth $d$, \Cref{algo:verification} returns \texttt{None}. If the depth bound is removed and cycle detection is enforced, the verifier is both \emph{sound} and \emph{complete}.
\end{restatable}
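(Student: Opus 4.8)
The plan is to separate the statement into its two halves—\textbf{soundness} (any sequence the algorithm returns is a genuine derivation) and \textbf{completeness} (any genuine derivation is eventually found)—and to handle the depth-bounded and the unbounded-with-cycle-detection regimes using the same combinatorial facts established in \Cref{thm:graph}. Soundness will rest on the local soundness of each rule in $\mathcal{R}$ together with transitivity of $\vdash_G$; completeness will rest on the finiteness of the derivation graph $S(\phi)$ and the standard level-order optimality of breadth-first search.

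For soundness I would induct on the length $k$ of the returned path. The algorithm emits a sequence $\phi \xrightarrow{R_{i_1}} \phi_1 \to \dots \to \phi_k$ only when $\phi_k = \psi$, and each edge is by construction an application of a rule whose side condition—a $d$-separation check in the appropriate mutilated graph—has been verified. Since every rule in $\mathcal{R}$ is a sound transformation under the causal semantics of $G$, each step gives $\phi_j \vdash_G \phi_{j+1}$; concatenating these finite sequences and invoking transitivity of $\vdash_G$ yields $\phi \vdash_G \psi$. Hence any returned sequence is a valid proof.

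For completeness I would use the hypothesis $\phi \vdash_G \psi$, which by definition supplies a finite sequence of rule applications transforming $\phi$ into $\psi$; let $d^\star$ be the length of a shortest such sequence. By \Cref{thm:graph}, $S(\phi)$ is finite-branching, and since $V$ is finite the set of well-formed expressions reachable from $\phi$ is itself finite, so $S(\phi)$ is a finite graph. Breadth-first search enumerates nodes in non-decreasing distance from $\phi$, so whenever $d \ge d^\star$ it reaches $\psi$ and returns a shortest witnessing path; conversely, if no derivation of length at most $d$ exists, the frontier is exhausted at depth $d$ without encountering $\psi$ and the algorithm returns \texttt{None}. Removing the depth bound and enforcing cycle detection turns the search into an exhaustive traversal of the connected component of $\phi$ in $S(\phi)$: cycle detection forbids revisiting states, so termination follows from finiteness, while every reachable node is still visited. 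Thus $\psi$ is returned exactly when it is reachable, i.e. exactly when $\phi \vdash_G \psi$, giving both soundness and completeness.

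The hard part will be justifying that reachability in the algorithm's search space coincides with the \emph{semantic} relation $\vdash_G$ rather than merely with derivability under the particular rewrite set $\mathcal{R}$. This demands two things: that $\mathcal{R}$ faithfully implements the three do-calculus rules with their $d$-separation preconditions alongside the probability-theoretic rewrites, so that no admissible transformation is missing; and that the preprocessing and normalization used for cycle detection collapse two expressions only when they are genuinely equal in $\mathcal{L}_{\text{causal}}$, so that no reachable target is erroneously pruned. Granting the soundness and completeness of do-calculus itself, the residual obligation is the faithful correspondence between $\mathcal{R}$ and $\vdash_G$ together with the correctness of normalization; I would discharge these by checking each implemented rule against its formal statement and arguing that normalization is a sound canonicalization that is injective on equivalence classes.
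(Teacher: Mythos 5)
Your proposal is correct and follows essentially the same route as the paper's proof: the same three-part decomposition (soundness of any returned path, depth-bounded completeness via the shortest derivation length $d^\star$ and the finite-branching property of $S(\phi)$ from \Cref{thm:graph}, and unbounded completeness via finiteness of the canonically normalized expression space over finite $V$ plus the visited set). The only substantive difference is presentational—the paper establishes soundness by contradiction through a case analysis of how a returned path could fail to be a valid derivation, whereas you argue it directly by induction and transitivity of $\vdash_G$—and your closing concern about reachability under $\mathcal{R}$ versus $\vdash_G$ is discharged in the paper definitionally ($\vdash_G$ is \emph{defined} as derivability under the do-calculus and probability rules), with the genuinely semantic question ($\models_G$ versus $\vdash_G$) deferred to the separate discussion of do-calculus completeness rather than folded into this theorem.
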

\begin{proof}
    Proved in \Cref{appendix:proof}.
\end{proof}

In addition, if no derivation exists between $\phi$ and $\psi$ with $k \leq d$ steps, {breadth-first search} (BFS) will terminate after exploring all expressions within depth $d$. 
{Further practical considerations are explained in \Cref{appendix:considerations}}.

While the above result establishes soundness and completeness of the verification algorithm as a bounded proof search procedure, it does not yet guarantee that proof search corresponds to semantic correctness of causal expressions. To justify using derivability as an evaluation criterion, we additionally view \DoVerifier as a logical system defined over a formal language $\mathcal{L}_{\text{causal}}$, equipped with a set of derivation rules $\mathcal{R}$ and a background causal model $G$ (the DAG), which may be provided or generated by an LLM. In this logical view, we distinguish between:
\begin{description}
    \item[Syntactic entailment ($\mathcal{H} \vdash_G \psi$):] 
    $\psi$ is derivable from the hypothesis 
    $\mathcal{H} = \{G, \phi\}$ using the symbolic transformation rules 
    admissible under the causal graph.
    
    \item[Semantic entailment ($\mathcal{H} \models_G \psi$):] 
    $\psi$ is true in all causal models consistent with $G$ in which all 
    assumptions in $\mathcal{H}$ hold.
\end{description}
\noindent
In simple terms, syntactic entailment asks whether one expression can be transformed into another using the allowed symbolic rules, while semantic entailment asks whether the two expressions represent the same causal quantity under the given graph. The former is a property of the derivation system, whereas the latter is a property of the underlying causal semantics.

We require our inference system to satisfy the following properties:
\[
\begin{aligned}
\textbf{Soundness: }  
& \mathcal{H} \vdash_G \psi
\;\Rightarrow\;
\mathcal{H} \models_G \psi, \\
\textbf{Completeness: } 
& \mathcal{H} \models_G \psi
\;\Rightarrow\;
\mathcal{H} \vdash_G \psi .
\end{aligned}
\]
These properties are properties of the underlying logical system rather than the verification algorithm itself, and are satisfied due to the completeness of do-calculus for causal identifiability \cite{10.1093/biomet/82.4.669} together with the standard axioms of probability theory. As a result, semantic correctness of model-generated causal expressions can be verified equivalently through syntactic derivation, which forms the theoretical basis of our verifier.

This equivalence is crucial for evaluation: it guarantees that checking derivability via proof search is sufficient to determine semantic correctness under the given causal graph.

\section{Experiments and Results}

We evaluate our verifier in two settings.

\textbf{Synthetic Data.} This setting enables precise control over the graph structure, derivation path, and rule usage. It allows us to test whether the verifier behaves correctly under known conditions, and whether it can recover known valid derivations.

\textbf{LLM Causal Reasoning.} This setting evaluates the verifier’s practical utility in assessing causal reasoning in free-form LLM outputs. Here, we test whether the verifier can recognize semantically correct answers that would be missed by surface-level metrics such as string match.

\subsection{Synthetic Data Test}
To verify the internal consistency of the verifier, and to show that existing metrics fail in cases where syntactically different expressions are the same semantically, we construct a synthetic dataset of over 10,000 expression pairs $(\phi, \psi)$ such that $\psi$ is provably derivable from $\phi$ under a known DAG $G$. Each pair involves between 1–4 rule applications and includes randomized use of do-calculus and probability rules $\mathcal{P}$. A description of data samples is shown in \Cref{appendix:data_samples}.

\paragraph{Sampling Procedure}
Let $V = \{v_1, \dots, v_n\}$ be a finite set of variables, and let 
$G = (V, E)$ be a randomly sampled acyclic graph. We sample directed edges 
independently according to
\begin{align*}
\mathbb{P}(v_i \to v_j) = p \quad \text{for } i < j,
\end{align*}
where $p \in (0,1)$ is the edge probability. The imposed ordering ensures 
that the resulting graph is acyclic. In our experiments, we fix $n \leq 10$ 
and $p = 0.5$ to balance expressivity and tractability.

We first construct a base expression
\begin{align}
\phi 
= 
P\!\left(
Y \mid \doC(X_1), \dots, \doC(X_k), Z_1, \dots, Z_m
\right),
\end{align}
where $Y \in V$ is chosen uniformly at random. A subset of 
$V \setminus \{Y\}$ is sampled as intervention variables 
$\{X_i\}$, and an additional subset is sampled as observational conditioning 
variables $\{Z_j\}$. To ensure structural diversity, the numbers of 
interventional variables $k$ and observational variables $m$ are randomly 
chosen for each sample, subject to DAG constraints.

We then sample a sequence of $\ell$ rule applications $\langle r_1, \dots, r_\ell\rangle$ and apply them sequentially,
\begin{align}
\phi 
&\stackrel{r_1}{\longrightarrow} 
\phi_1
\stackrel{r_2}{\longrightarrow} 
\dots 
\stackrel{r_\ell}{\longrightarrow} 
\psi,
\end{align}
where each 
$r_i \in 
\{\text{Rule~\ref{eq:rule1}, Rule~\ref{eq:rule2}, Rule~\ref{eq:rule3}}\} 
\cup \mathcal{P}$.
Rule applications are randomized but constrained to be valid under the 
conditional independencies implied by $G$. We treat the initial expression 
as $\phi$ and the final expression as $\psi$.

The mean number of edges in sampled graphs is $7$ (minimum $3$, maximum $10$). 
Rule~\ref{eq:rule1} was applied $21{,}172$ times, 
Rule~\ref{eq:rule2} was applied $29{,}563$ times, and 
Rule~\ref{eq:rule3} was applied $22{,}508$ times.

\paragraph{Synthetic Data Performance}
Our symbolic verifier achieves $100\%$ precision and recall under depth limit $d=5$, demonstrating correctness of the derivation engine, while other methods such as string match, or token-level F1 performed poorly due to $\phi$ and $\psi$ being too distinct syntactically.

The experimental results show a key strength of our framework: it can correctly recognize when two expressions are equivalent under the rules of do-calculus and probability, even if they differ in formatting, variable order, or surface form.

\subsection{LLM Causal Reasoning Test}

\begin{table}[t]
\centering
\setlength{\tabcolsep}{6pt}  %
\small
\begin{tabular}{lccc}
\toprule
\textbf{Model} &
\textbf{Exact} &
\textbf{LLM} &
\textbf{\DoVerifier} \\
\midrule
Llama3.1-8B            & 0.57 & 0.60 & \textbf{0.73} \\
Mistral-7B             & 0.58 & 0.80 & \textbf{0.94} \\
Llama3.1-8B-Instruct   & 0.88 & 0.66 & \textbf{0.90} \\
Gemma-7B-it            & 0.80 & 0.58 & \textbf{0.84} \\
\bottomrule
\end{tabular}
\vspace{1mm}
\caption{\DoVerifier recovers more semantically correct causal expressions than
exact string match or LLM-based judging across four LLMs on CLadder. Accuracy is
measured with respect to formal derivability under the causal graph.}
\label{tab:equiv-metrics}
\end{table}
\paragraph{Uncovering Missed Correct Answers} We evaluate the ability of our symbolic verifier to improve the accuracy of large language model (LLM) evaluation in causal reasoning. Specifically, we ask: \textit{Can our method recover correct answers that are missed by naive evaluation metrics?}

\paragraph{Evaluated Dataset and Models} To investigate this, we use the \textbf{CLadder} benchmark \cite{cladder}, a suite of causal questions grounded in known DAGs. Each question is paired with a ground-truth answer expressed as a formal causal expression. We prompt \texttt{Llama-3-8B}~\cite{grattafiori2024llama3herdmodels}, \texttt{Llama-3-8BInstruct}~\cite{grattafiori2024llama3herdmodels}, \texttt{Mistral-7B}~\cite{jiang2023mistral7b}, and \texttt{Gemma-7B-IT}~\cite{gemmateam2024gemmaopenmodelsbased} to answer these questions and parse their responses, including a DAG that models the problem into symbolic expressions. {Detailed prompts and parsing are demonstrated in \Cref{appendix:prompt}.} Because CLadder provides a canonical mapping from natural-language descriptions to variable symbols and uses a constrained expression format, the formalization step is stable in this setting; we discuss formalization assumptions and edge cases in \Cref{appendix:formalization}. Each prediction is then compared to the ground-truth using three metrics:
\begin{itemize}
    \item \textbf{String Match:} A response is marked correct only if it matches the ground-truth expression exactly (after normalizing).
    \item \textbf{LLM-as-a-judge:} We provide the generated causal expression, generated causal graph, and the ground truth expression for OpenAI’s GPT-4o \cite{openai2024gpt4o} to determine if the two expressions are equivalent.
    \item \textbf{Symbolic (Ours):} A response is considered correct if it is derivable from the ground-truth using valid applications of do-calculus and probability rules (under 20 steps).
\end{itemize}

Alternative metrics are discussed in \Cref{appendix:alternative_metrics}.

\paragraph{Results} As shown in \Cref{tab:equiv-metrics}, our symbolic method identifies more correct answers than string match and LLM-as-a-judge, raising the accuracy across all models. We find that a substantial fraction of model outputs marked incorrect by string match are in fact causally valid under the given DAG. Our method is more useful when models such as \texttt{Llama3.1-8b} and \texttt{Mistral-7B} output an alternative form of the correct use. This improvement highlights an important phenomenon: many model responses are \textit{causally correct} but fail naive evaluation due to superficial differences in formatting, variable order, or phrasing. The running time of verifying through BFS is minimal (milliseconds), and the formalization step is relatively stable.  

Our symbolic verifier recovers this missing accuracy by judging expressions based on their semantic content, not their surface form. It enables a more faithful and rigorous assessment of causal reasoning in LLMs, ensuring that models receive credit for valid reasoning even when their output does not match the reference verbatim.

When high-performing models such as \texttt{Llama3.1-Instruct} already align well with the ground-truth format, the relative gain over string match is naturally smaller. This suggests that the benefit of symbolic evaluation is most pronounced when models exhibit partial causal understanding but struggle with precise formalization. In addition, CLadder instances involve only a small number of variables, limiting the number of distinct but semantically valid expressions. As graph size increases, the space of equivalent causal expressions grows rapidly, and correctness becomes less about producing a single canonical form than about remaining within a large equivalence class. In this regime, surface-level metrics increasingly conflate formatting variation with reasoning errors. Furthermore, when using LLM-as-a-judge, we lose the soundness guarantee that both string match and \DoVerifier provide\footnote{While string match is sound, it is not complete.}. We also hypothesize that including the DAG in LLM-as-a-judge prompts may act as a source of noise, leading the model to overinterpret structural cues and misjudge otherwise valid expressions. 
 
 We identified several common patterns where symbolic verification offers substantial advantages:

\begin{description}
    \item[Intervention with conditioning:] Our system validates equivalence between expressions like $P(Y\mid \doC (X), Z=z)$ and $P(Y \mid \doC(X), Z)$ by correctly handling instantiated versus symbolic values.
    \item[Rule-based transformations:] Our system correctly identifies that $P(Y \mid \doC(X), Z)$ can be transformed into $P(Y \mid X, Z)$ in DAGs where $Z$ d-separates $Y$ from incoming edges of $X$. This conversion from interventional to observational queries represented the majority of all verified equivalences. Note that this is important since the ground-truth of CLadder is in observational queries.
    \item[Multi-step proofs:] For more complex cases, our verifier successfully applied sequential rules. These accounted for fewer verified equivalences but are representative of some of the most challenging verification scenarios.
\end{description}

\subsection{Improving LLMs with Symbolic Feedback}
\label{sec:improving_llms}

Beyond evaluation, \DoVerifier provides symbolic feedback\footnote{In formal methods, verifiers typically return diagnostic information such as violated constraints rather than a binary signal.} at inference time to improve LLM-generated causal expressions without access to ground-truth answers. The feedback relies solely on the predicted causal graph and the model output, and is designed to correct violations of do-calculus semantics rather than errors in question interpretation or graph construction. Prior work has shown that symbolic feedback loops (e.g., SMT-based verification in mathematics and logic) can improve LLM accuracy by providing formal, structured corrections \cite{hong2025distributionaloffpolicyevaluationbellman, murphy2024autoformalizingeuclideangeometry}.

The feedback experiment is intended as a proof of concept; improvements may partially reflect generic re-prompting effects rather than the symbolic content of the verifier alone, which we leave to future ablations.

\paragraph{Formal Description of Feedback Loop}
Given a causal graph $G=(V,E)$ (potentially generated by an LLM) and an LLM-generated expression
\begin{align}
\phi_{\text{LLM}}
\;=\;
P\!\left(Y \mid \doC(\mathbf{X}_{\text{do}}), \mathbf{Z}_{\text{obs}}\right),
\end{align}
with \textbf{no access to the ground-truth target} $\psi$, our goal is to 
compute a revised expression $\phi_{\text{LLM}}'$ that is causally more valid
(i.e., more likely to match $\psi$) using structural reasoning over $G$.

Here, $\mathbf{X}_{\text{do}} \subseteq V$ denotes the interventional variables
and $\mathbf{Z}_{\text{obs}} \subseteq V$ the observational conditioning
variables. For each variable $Z \in \mathbf{Z}_{\text{obs}}$, we test:

\begin{enumerate}
    \item Mediator Detection: If $Z$ lies on a directed path from some ancestor 
$A \in \mathbf{Z}_{\text{obs}} \cup \mathbf{X}_{\text{do}}$ to the outcome $Y$,
\begin{align}
A \to \dots \to Z \to \dots \to Y,
\end{align}
then $Z$ is a mediator. In this case, we prompt the model to avoid 
conditioning on $Z$, since doing so may block part of the causal pathway and 
lead to underestimation of the causal effect.
\item Treatment Confounding: If $Z \in \mathbf{Z}_{\text{obs}}$ is a common cause of both a treatment 
variable $X \in \mathbf{X}_{\text{do}}$ and the outcome $Y$, that is,
\begin{align}
Z \to X 
\quad \text{and} \quad 
Z \to Y,
\end{align}
then $Z$ is a confounder. In such cases, we suggest replacing $Z$ with 
$\doC(Z)$ when feasible, since intervening on $Z$ may help eliminate 
confounding bias, particularly when front-door adjustment is applicable.
\item d-Separation Violation: Let
\begin{align}
\mathbf{W} 
= 
(\mathbf{Z}_{\text{obs}} \setminus \{Z\}) 
\cup 
\mathbf{X}_{\text{do}}.
\end{align}
If $X \not\!\perp\!\!\!\perp Y \mid \mathbf{W}$, then conditioning on $Z$ may 
introduce bias, since $Z$ is not independent of $Y$ given the remaining 
variables $\mathbf{W}$.
\end{enumerate}

Each diagnostic message is appended verbatim to the original prompt and the
model is re-queried once.
The exact feedback strings and formal checks are provided in
\Cref{app:feedback_prompts}.

\paragraph{Results}

\begin{table}[t]
\centering
\resizebox{\columnwidth}{!}{%
\begin{tabular}{lcc}
\toprule
\textbf{Model} & \textbf{Before Feedback} & \textbf{After Feedback} \\
\midrule
LLaMA3.1-8B & 0.73 & \textbf{0.93} \\
Mistral-7B & 0.94 & \textbf{0.99} \\
LLaMA3.1-8B-Instruct & 0.90 & \textbf{0.98} \\
Gemma-7B-it & 0.84 & \textbf{0.87} \\
\bottomrule
\end{tabular}
}
\caption{Accuracy before and after applying verifier-guided feedback. Feedback improves semantic correctness across all models.}
\label{tab:feedback-metrics}
\end{table}

\Cref{tab:feedback-metrics} shows the improvement of LLM performance using our feedback loop. We find that the effectiveness of symbolic feedback depends heavily on the type of error in the original expression. For example, when the model incorrectly uses $P(Y \mid X)$ instead of $P(Y \mid \doC(X))$, feedback guided by d-separation and rule-based reasoning often corrects the mistake. In contrast, if the model hallucinates an irrelevant variable or misrepresents the structure of the DAG itself, our framework is less effective since the symbolic transformations cannot fix structurally flawed inputs.

\section{Discussions}
Interpreting evaluation results for causal reasoning requires distinguishing between surface-level variation and genuine semantic error in language model outputs. Symbolic verification provides a principled way to make this distinction explicit by grounding evaluation in causal semantics rather than surface form. In doing so, it clarifies what it means for an output to be causally correct, when deviations reflect true violations of causal assumptions, and why sound but incomplete verification can nevertheless be effective for practical evaluation.
\paragraph{Semantic Equivalence as Proof-Theoretic Reachability}
From an NLP evaluation perspective, causal correctness is inherently a property of equivalence classes rather than individual surface forms. We formalize this by defining semantic equivalence with respect to a causal graph $G$ as mutual derivability:
\begin{align}
\phi \equiv_G \psi 
\;\iff\; 
(\phi \vdash_G \psi \;\land\; \psi \vdash_G \phi).
\end{align}
This definition groups expressions into equivalence classes 
$[\phi]_{\equiv_G} \subset \mathcal{L}_{\text{causal}}$, 
each representing a single causal quantity under $G$.

Empirically, we find that LLM-generated outputs often belong to the correct equivalence class while differing syntactically from reference answers. For example, $P(Y \mid X, Z)$ and $P(Y \mid \doC(X), Z)$ may be lexically distinct but semantically equivalent under appropriate d-separation conditions. Our symbolic verifier captures this distinction by checking derivability rather than string identity.

\paragraph{Failure Types Align with Non-derivability}
Although many apparent errors arise from surface-form variation, our framework also makes explicit when an LLM output is genuinely causally incorrect. Common failure modes in causal QA benchmarks, such as conditioning on colliders or omitting required adjustments, correspond to cases where no valid symbolic derivation exists under the causal graph. These outputs therefore lie outside the correct semantic equivalence class, rather than being alternative but valid formulations.

For example, when
\begin{align}
(Y \not\!\perp\!\!\!\perp Z \mid X)_{G_{\overline{X}}},
\end{align}
we have
\begin{align}
P(Y \mid X, Z) \not\equiv_G P(Y \mid \doC(X), Z).
\end{align}
In such cases, symbolic verification fails not because of lexical mismatch, but because the model’s reasoning violates core causal independence assumptions.

From an NLP evaluation perspective, this distinction is critical as it separates benign surface variation from substantive semantic error. By making failure conditions explicit, symbolic verification provides a principled way to identify when a model’s output reflects genuinely incorrect causal reasoning, rather than merely an alternative expression of the correct answer.

\paragraph{Scalability and Practical Scope}
A common concern with formal verification methods is scalability: complete equivalence checking in expressive symbolic languages can be computationally
expensive or even undecidable in the worst case, leading to the perception that such methods apply only to ``toy'' problems. Indeed, this becomes a tradeoff between \emph{completeness} and \emph{speed}. In the context of evaluation, this
tradeoff is often acceptable. Even an incomplete but sound verifier can recover a substantial amount of ``hidden correctness'' missed by surface-level metrics, and can provide precise, rule-level diagnostic feedback when verification fails.
As problem size grows (e.g., larger graphs or longer expressions), practical techniques such as canonicalization, memoization of visited expressions, and heuristic or learned rule ordering can improve coverage under fixed compute,
while preserving soundness. Formal properties of bounded proof search are discussed in the \Cref{appendix:middle_ground}.

\section{Conclusion}

We introduced \DoVerifier, a formal verification approach that evaluates the causal validity of LLM-generated expressions by modeling causal reasoning as a symbolic derivation task using do-calculus and probability rules. Our approach recovers semantically correct answers that are missed by standard metrics, improves recall on causal benchmarks, and enables structured feedback to refine model outputs.

These findings reveal a significant gap in current evaluation methods for language-based causal reasoning and highlight the importance of symbolic verification for building reliable causal reasoning systems. By connecting natural language generation with formal inference, \DoVerifier offers a principled step toward evaluating models based on what they truly understand rather than how they phrase it. As language model outputs increasingly involve symbolic and causal structure, evaluation must move beyond surface similarity toward semantic verification.

\section*{Limitations}

While promising, our approach has several limitations and opens directions for future work. 
On the one hand, the space of valid derivations can grow rapidly with the number of variables and the depth of allowed transformations. Although we employ optimizations like expression normalization and memoization, our breadth-first search remains computationally expensive in dense or deep DAGs. Future work could explore neural-guided proof search or approximate symbolic methods.
On the other hand, regarding the feedback mechanism, the current feedback module improves the causal validity of model outputs using only the predicted DAG and the initial expression. It does not incorporate the original natural language question. As a result, the revised expression may be causally correct under the graph, but not necessarily faithful to the question intent. In practice, we observe that most LLM errors stem from misapplying causal semantics rather than misreading the question, but integrating question-aware feedback remains a valuable direction for future work. Furthermore, some recent frontier models can achieve near-ceiling performance on causal benchmarks such as CLadder, which involve only a few variables and limited syntactic variation. In such settings, surface-level metrics may appear sufficient, even though they fail to capture semantic equivalence in more complex regimes. Evaluating how semantic verification behaves as both model capacity and graph size increase is an important direction for future work.

\section*{Ethical Considerations}
This work focuses on the formal verification of causal expressions generated by large language models (LLMs), with the goal of improving their semantic correctness and reliability in reasoning tasks. Our proposed framework does not involve human subject data, personally identifiable information, or real-world deployment in high-stakes settings such as healthcare or public policy. However, we acknowledge that causal claims can influence decision-making in sensitive domains. As such, we emphasize that symbolic correctness under do-calculus does not guarantee practical validity unless the underlying causal graph is itself accurate and contextually appropriate.

Our framework is designed for evaluation and diagnostic purposes, not for automating causal decisions. We caution against interpreting verified expressions as endorsements of correctness in real-world applications without domain expertise. To avoid misuse, we release our tools with clear disclaimers that they are intended for research and educational purposes.

\section*{Acknowledgments}
We thank Dominik Janzing for helpful discussions that clarified aspects of this work. Resources used in preparing this research were provided, in part, by the Province of Ontario, the Government of Canada through CIFAR, and companies sponsoring the
Vector Institute \url{www.vectorinstitute.ai/partnerships/}. This research was supported by the ETH AI Center through an ETH AI Center postdoctoral fellowship to Yinya Huang.

\bibliography{custom}

\appendix
\section{Desired Properties of a Good Verifier}
\label{appendix:desired_properties}

A central question in the design of verifiers for symbolic causal reasoning is: what kinds of differences between derivations should not affect the evaluation? In other words, what transformations should a good evaluator be invariant to. In this section, we formalize the invariance and sensitivity properties that an ideal evaluator should satisfy. These properties are motivated both by formal semantics and by practical considerations in modeling causal reasoning.

Given an initial expression $\phi_0$, a target expression $\phi^\star$, and a derivation sequence $\mathcal{D} = (\phi_0, \phi_1, \dots, \phi_k = \phi^\star)$, the evaluator should assign a score $s(\mathcal{D}) \in \mathbb{R}$  that reflects the logical correctness, minimality, and interpretability of the derivation.

\paragraph{Definition (Syntactic Equivalence).} Let $\phi$ and $\phi'$ be probability expressions. We write $\phi \equiv_{\text{syn}} \phi'$ if they differ only by a syntactic permutation that preserves semantic content, such as reordering terms in a conditioning set:
\begin{equation}
    P(Y\mid X, Z) \equiv_{\text{syn}} P(Y\mid Z, X)
\end{equation}
\paragraph{Desideratum 1 (Syntactic Invariance).} Let $\mathcal{D}$ be a derivation and $\mathcal{D}'$ a derivation obtained by a sequence of syntactic equivalences to the intermediate steps. Then:
\begin{equation}
    s(\mathcal{D}) = s(\mathcal{D'})
\end{equation}

\paragraph{Definition (Well-Typed Step).} A step $\phi_i \to \phi_{i+1}$ using do-calculus Rule $r \in \{\text{Rule} \ref{eq:rule1}, \text{Rule} \ref{eq:rule2}, \text{Rule} \ref{eq:rule3} \}$ is valid if and only if the required graphical conditional independence is entailed by DAG $G$ associated with the problem.
\paragraph{Desideratum 2 (Rule Sensitivity).} If $\mathcal{D}$ and $\mathcal{D}'$ differ only in that $\mathcal{D}'$ includes a rule application $r$ that violates the required independence, then:
\begin{equation}
    s(\mathcal{D'}) < s(\mathcal{D})
\end{equation}
This ensures the evaluator penalizes logically invalid or unsound reasoning.

\paragraph{Definition (Commutativity of Independent Steps).}
Consider two derivation steps $\phi_i \to \phi_{i+1}$ and $\phi_j \to \phi_{j+1}$ that apply valid rules to disjoint subexpressions of a causal expression. Two derivations are said to be commutative if they differ only in the order in which these independent steps are applied.

\paragraph{Desideratum 3 (Invariance to Independent Step Order).}
If two derivations differ only by the ordering of commutative independent steps, the evaluator should assign them the same score. This ensures that evaluation is not sensitive to arbitrary ordering choices among logically independent rule applications.

\paragraph{Definition (Derivational Equivalence).}
Let $\mathcal{D}_1$ and $\mathcal{D}_2$ be distinct derivations from $\phi_0$ to $\phi^\star$ such that all steps in both derivations are valid under the causal graph, although they may differ in the choice or order of applied rules.

\paragraph{Desideratum 4 (Robustness to Valid Alternatives).}
The evaluator should treat derivationally equivalent sequences as equally correct, assigning similar scores to alternative but valid reasoning paths. This prevents penalizing models for producing different symbolic derivations that nevertheless correspond to the same causal semantics.

\section{Implementation Details {of \DoVerifier}}
\label{appendix:details}

\begin{algorithm}
\caption{Causal Expression Equivalence Verification}
\label{algo:verification}

\begin{algorithmic}[1]
\State Initialize queue $Q \gets [(\phi, [])]$ \Comment{(expression, proof path $\pi$)}
\State Initialize visited set $\mathcal{V} \gets \{\phi\}$
\While{$Q$ not empty}
    \State $(\phi_{\text{cur}}, \pi) \gets Q.\text{dequeue}()$
    \If{$\phi_{\text{cur}} = \psi$}
        \State \Return $\pi$ \Comment{Found derivation}
    \EndIf
    \If{$|\pi| < d$}
        \For{each applicable rule $r$}
            \State $\phi_{\text{next}} \gets \text{apply}(r, \phi_{\text{cur}})$
            \If{$\phi_{\text{next}} \notin \mathcal{V}$}
                \State $\mathcal{V}.\text{add}(\phi_{\text{next}})$
                \State $Q.\text{enqueue}((\phi_{\text{next}}, \pi + [r]))$
            \EndIf
        \EndFor
    \EndIf
\EndWhile
\State \Return None \Comment{No derivation found within depth $d$}
\end{algorithmic}
\end{algorithm}
Our implementation converts abstract causal expressions into concrete computational objects that can be manipulated through rule applications. The core components are implemented as follows:

\paragraph{Expression Representation}
We represent causal expressions using a symbolic framework built on SymPy. Each causal probability expression $P(Y\mid\text{do}(X), Z)$ is represented as a \texttt{CausalProbability} object with an outcome variable and a list of conditioning factors, which may include both observational variables and interventional variables (wrapped in \texttt{Do} objects). This representation allows for:

\begin{itemize}
    \item Unique identification of expressions through consistent string conversion
    \item Distinction between interventional and observational variables
    \item Manipulation of expressions through rule applications
\end{itemize}

\paragraph{Expression Parser.}
 As a preprocessing step, this part parses expressions from natural language or symbolic form into normalized structured representations in $\mathcal{L}_{\text{causal}}$. This includes:
    \begin{itemize}
        \item Recognizing both observational terms like $P(Y \mid X)$ and interventional ones like $P(Y \mid \doC(X), Z)$. 
        \item Converting string-based expressions into a canonical symbolic form using a custom SymPy-based object\footnote{\url{https://www.sympy.org}} that allows equivalence checks that are invariant to variable reordering or formatting.
        \item If a causal graph is provided, it is parsed into a standard NetworkX \footnote{\url{https://networkx.org/}} DAG object.
    \end{itemize}
    This step is necessary to interface LLM outputs with our proof search module. 

\paragraph{Causal Graph Representation}
Causal graphs are represented using NetworkX directed graphs, where nodes correspond to variables and edges represent causal relationships. For each rule application, we create modified graph structures according to the do-calculus definitions:

\begin{itemize}
    \item For Rule 1, we remove incoming edges to intervention variables using $G_{\overline{X}}$
    \item For Rule 2, we remove both incoming edges to primary interventions and outgoing edges from secondary interventions using $G_{\overline{X}\underline{Z}}$
    \item For Rule 3, we perform the appropriate graph modifications for $G_{\overline{XZ(W)}}$ as specified by Pearl
\end{itemize}

\paragraph{D-separation Testing}
To determine rule applicability, we implement d-separation tests using NetworkX's built-in \texttt{is\_d\_separator} function. For each potential rule application, we:

\begin{enumerate}
    \item Create the appropriate modified graph based on the rule
    \item Identify the variables that need to be tested for conditional independence
    \item Perform the d-separation test with the appropriate conditioning set
    \item Apply the rule only if the independence condition is satisfied
\end{enumerate}

For example, when applying Rule 1 to remove an observation $Z$ from $P(Y\mid\text{do}(X), Z)$, we test whether $Y$ and $Z$ are d-separated given $X$ in the graph $G_{\overline{X}}$.

\paragraph{Search Algorithm Optimization}
To make the breadth-first search efficient, we implement several optimizations:

\begin{itemize}
    \item \textbf{Expression normalization:} We convert expressions to canonical string representations with consistent ordering and whitespace removal.
    
    \item \textbf{Memoization:} We cache the results of d-separation tests to avoid redundant graph operations.
    
    \item \textbf{Early termination:} We immediately return a proof path when the target expression is found.
    
    \item \textbf{Visited set tracking:} We maintain a set of already-visited expressions to avoid cycles and redundant exploration.
\end{itemize}
\paragraph{Handling Incomplete Knowledge}
A key innovation in our implementation is the ability to work with incomplete causal knowledge. When the full DAG structure is unknown, our system can:

\begin{itemize}
    \item Work with explicitly provided independence pairs between variables
    \item Infer independence relationships from partial graph information
    \item Explore potential equivalences under different assumptions
\end{itemize}

\paragraph{Scope of Verification}
In CLadder \cite{cladder}, the variables are binary, i.e., $X \in \{0, 1\}$. We state the following theorem:
\begin{restatable}[Binary Expectation-Probability Equivalence]{thm}{exp-prob}
If $X$ is a binary random variable taking values in $\{0, 1\}$, then for any conditioning event or a set of variables $\mathbf{Z}$,
\begin{align}
    \mathbb{E}[X\mid \mathbf{Z}] = \mathbb{P}(X = 1 \mid \mathbf{Z})
\end{align}
\label{thm:exp-prob}
\end{restatable}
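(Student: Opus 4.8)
The plan is to prove the identity $\mathbb{E}[X \mid \mathbf{Z}] = \mathbb{P}(X = 1 \mid \mathbf{Z})$ directly from the definition of conditional expectation, exploiting the fact that $X$ takes only the values $0$ and $1$. The key observation is that for a binary random variable, the expectation collapses to a single term because the contribution from the value $0$ vanishes.

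First I would write out the conditional expectation as a sum (or, more generally, an integral) over the support of $X$:
\begin{align}
\mathbb{E}[X \mid \mathbf{Z}]
=
\sum_{x \in \{0,1\}} x \cdot \mathbb{P}(X = x \mid \mathbf{Z}).
\end{align}
Next I would expand this finite sum explicitly over the two possible values. The term corresponding to $x = 0$ contributes $0 \cdot \mathbb{P}(X = 0 \mid \mathbf{Z}) = 0$, so it drops out entirely. The term corresponding to $x = 1$ contributes $1 \cdot \mathbb{P}(X = 1 \mid \mathbf{Z}) = \mathbb{P}(X = 1 \mid \mathbf{Z})$. Combining these two observations yields
\begin{align}
\mathbb{E}[X \mid \mathbf{Z}]
=
0 \cdot \mathbb{P}(X = 0 \mid \mathbf{Z})
+
1 \cdot \mathbb{P}(X = 1 \mid \mathbf{Z})
=
\mathbb{P}(X = 1 \mid \mathbf{Z}),
\end{align}
which is exactly the claimed equality.

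Since this argument is a direct unfolding of definitions, there is no substantive obstacle; the entire content of the proof is the observation that the binary support makes the expectation sum finite and that the $x=0$ term is annihilated. The only care needed is to ensure the manipulation is valid whether $\mathbf{Z}$ denotes a conditioning event or a set of conditioning variables. In the latter case the identity should be read as holding pointwise for each realization $\mathbf{Z} = \mathbf{z}$ (or almost surely, under the measure-theoretic definition of conditional expectation), and the same two-term expansion applies verbatim once $\mathbf{Z}$ is instantiated. I would state this briefly to cover both readings of $\mathbf{Z}$, but no additional machinery is required beyond the definition of conditional expectation for a discrete random variable.
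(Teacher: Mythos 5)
Your proof is correct and matches the paper's own argument essentially verbatim: both expand $\mathbb{E}[X \mid \mathbf{Z}]$ as the two-term sum $\sum_{x \in \{0,1\}} x \cdot \mathbb{P}(X = x \mid \mathbf{Z})$ and observe that the $x = 0$ term vanishes. Your closing remark on reading the identity pointwise (or almost surely) when $\mathbf{Z}$ is a set of variables is a small but sensible addition that the paper leaves implicit.
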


\begin{proof}
    Let $X$ be a binary random variable taking values in $\{0,1\}$, and fix an 
arbitrary conditioning set $\mathbf{Z}$. By definition of conditional expectation for a discrete random variable:
    \begin{align}
        \mathbb{E}[X \mid \mathbf{Z}] = \sum_{x \in \{ 0, 1\}} x \cdot \mathbb{P}(X=x\mid \mathbf{Z})
    \end{align}
    Since $X \in \{0, 1\}$ the sum becomes:
    \begin{align}
        &= 0 \cdot \mathbb{P}(X=0\mid \mathbf{Z}) + 1\cdot \mathbb{P}(X=1\mid \mathbf{Z}) \\
        &= \mathbb{P}(X=1\mid \mathbf{Z})
    \end{align}
\end{proof}
Since CLadder contains only binary variables, we use \Cref{thm:exp-prob} to replace expression that involve an expectation with a probability.

\section{Proof of Theorem~\ref{thm:graph}}
\label{appendix:proof1}

We restate the theorem for convenience:

\begin{restatable}[Derivation Graph]{thm}{soundness}
Let $\phi \in \mathcal{L}_{\text{causal}}$. Define a directed graph 
$S(\phi)$ where:
\begin{itemize}
    \item Each node is a unique causal expression derivable from $\phi$;
    \item A directed edge $\phi \to \phi'$ exists if $\phi'$ can be obtained 
    from $\phi$ by applying a single valid transformation rule.
\end{itemize}
Then $S(\phi)$ is a well-defined, finite-branching graph.
\end{restatable}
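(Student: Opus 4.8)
The plan is to show the two claims separately: well-definedness (nodes and edges are unambiguously specified) and finite-branching (each node has finitely many out-edges). The key observation driving both is that the language $\mathcal{L}_{\text{causal}}$ over a fixed finite variable set $V$ is finite: every expression $P(Y \mid \mathbf{Z})$ is determined by a choice of $Y \subseteq V$ together with a conditioning set $\mathbf{Z}$ drawn from $V \cup \{\mathrm{do}(X) : X \subseteq V\}$. Since $V$ is finite, there are only finitely many such subsets and hence only finitely many well-formed expressions (after normalization modulo reordering of the conditioning set, which the parser canonicalizes). This finiteness of the ambient expression space is what I would establish first, as it underpins everything else.

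For well-definedness, I would argue that the node set and edge set of $S(\phi)$ are each specified by a clear membership criterion. A node is any expression reachable from $\phi$ by some finite sequence of valid rule applications; because expressions are canonicalized to unique string representations, two derivations producing the same underlying causal quantity map to the same node, so nodes are genuinely distinct objects rather than duplicates. An edge $\phi' \to \phi''$ exists precisely when $\phi''$ is obtained from $\phi'$ by one valid rule application, where validity is decided by the associated $d$-separation test in the appropriately mutilated graph ($G_{\overline{X}}$, $G_{\overline{X}\underline{Z}}$, or $G_{\overline{X}\,\overline{Z(W)}}$). Since $d$-separation in a fixed finite DAG is a decidable, deterministic condition, edge membership is unambiguous, so $S(\phi)$ is well-defined.

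For finite-branching, the plan is to bound the number of rule applications available at any single node. Fix a node $\phi'$. Each of the three do-calculus rules, together with each probability rule in $\mathcal{P}$, is a \emph{unary} rewrite parameterized by a choice of which variables (or interventional terms) to insert, delete, or exchange. The number of such parameter choices is bounded by the number of subsets of $V$ one can select for the role of $Z$ (or the inserted/deleted observation or action), which is at most $2^{|V|}$ per rule. Multiplying by the finite number of rules gives a finite upper bound on the out-degree of $\phi'$, uniform over all nodes. Hence every node has finite out-degree, establishing the finite-branching property.

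\textbf{The main obstacle} I anticipate is being careful about what ``unique'' means for nodes and ensuring the bounded-subset claim for each rule is actually tight: one must confirm that each rule's applicability is keyed to a choice from a bounded collection of variable subsets (so that the out-degree bound $\text{(number of rules)} \times 2^{|V|}$ genuinely holds), rather than, say, admitting unboundedly many simultaneous insertions. This is handled by noting that each rule application inserts, deletes, or exchanges a single designated set drawn from the finite power set of $V$, as stated in the rule schemas; the footnote accompanying the theorem already asserts this, so I would formalize that each rule touches only a bounded subset of variables and invoke the finiteness of $V$ to close the argument.
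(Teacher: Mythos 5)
Your proposal is correct and follows essentially the same route as the paper's proof: both first establish finiteness of $\mathcal{L}_{\text{causal}}$ over the fixed finite variable set $V$ (modulo canonical normalization), then obtain well-definedness from the fact that each rule application is a deterministic partial function whose applicability is decided by the relevant $d$-separation condition, and finally bound the out-degree by the finite rule set times the finitely many variable-subset instantiations. Your explicit $2^{|V|}$-per-rule bound and the remark on decidability of $d$-separation are slightly more detailed than the paper's phrasing, but the argument is the same.
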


\begin{tcolorbox}[
  colback=white,
  colframe=gray,
  boxrule=0.4pt,
  arc=2pt,
  left=6pt,
  right=6pt,
  top=6pt,
  bottom=6pt,
  breakable,
  fonttitle=\bfseries
]
\begin{proof}
Let $G=(V,E)$ be a causal DAG with finite node set $V$, and let
$\mathcal{L}_{\text{causal}}$ denote the set of well-formed causal expressions
over $V$. Each expression is of the form $P(Y \mid \mathbf{Z})$, where
$Y \subseteq V$ and $\mathbf{Z}$ consists of observational variables in $V$
and interventional variables of the form $\mathrm{do}(X)$ with $X \subseteq V$.
Since $V$ is finite, and expressions are maintained in canonical normalized
form, there are only finitely many such choices of $Y$ and $\mathbf{Z}$, and
hence $\mathcal{L}_{\text{causal}}$ is finite.

Let $\mathcal{R}$ denote the set of valid transformation rules. Each rule
$r \in \mathcal{R}$ defines a partial function
\begin{align}
r : \mathcal{L}_{\text{causal}} \to \mathcal{L}_{\text{causal}},
\end{align}
whose domain is determined by syntactic conditions and graphical independence
constraints in $G$. This induces a derivation relation $\Rightarrow$ on
$\mathcal{L}_{\text{causal}}$, defined by
\begin{align}
\phi \Rightarrow \phi'
\;\Longleftrightarrow\;
\exists r \in \mathcal{R} \text{ such that } r(\phi)=\phi'.
\end{align}

The derivation graph $S(\phi)$ consists of all expressions reachable from $\phi$
under $\Rightarrow$, with edges corresponding to single rule applications.
Since each rule is well defined on its domain, the relation $\Rightarrow$ is
well defined, and therefore so is $S(\phi)$.

Moreover, for any $\phi \in \mathcal{L}_{\text{causal}}$, the set
\begin{align}
\{\phi' \in \mathcal{L}_{\text{causal}} \mid \phi \Rightarrow \phi'\}
\end{align}
is finite. Indeed, the rule set $\mathcal{R}$ is finite, and each rule ranges
over subsets of $V$, of which there are finitely many. Consequently, each
expression admits only finitely many valid instantiations of transformation
rules.

It follows that every vertex in $S(\phi)$ has finite out-degree. Therefore,
$S(\phi)$ is a well-defined, finite-branching directed graph.
\end{proof}
\end{tcolorbox}

\section{Proof of Theorem~\ref{thm:soundness}}
\label{appendix:proof}

We formally prove the soundness and completeness of our verification framework by modeling it as a symbolic derivation system over a finite-branching graph induced by transformation rules.

We restate the theorem for convenience:

\begin{restatable}[Soundness \& Completeness of Proof Search]{thm}{soundness}
\label{thm:soundness2}
Let $G$ be a causal DAG, and let $ \phi, \psi \in \mathcal{L}_{\text{causal}} $.
If $ \phi \vdash_G \psi $, then \Cref{algo:verification} returns a valid proof sequence within 
depth $d$, for some finite $d$. Conversely, if no such derivation exists within 
depth $d$, \Cref{algo:verification} returns \texttt{None}. If the depth bound is removed and cycle detection is enforced, the verifier is both \emph{sound} and \emph{complete}.
\end{restatable}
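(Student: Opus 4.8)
The plan is to prove the statement in two layers, corresponding to the two regimes it describes: the bounded proof search with depth limit $d$, and the unbounded search with cycle detection. For both, the key structural fact is already in hand, namely that the derivation graph $S(\phi)$ from \Cref{thm:graph} is well-defined and finite-branching over the finite language $\mathcal{L}_{\text{causal}}$. I would first fix notation: \Cref{algo:verification} is exactly breadth-first search on $S(\phi)$ starting at $\phi$, where each node carries the proof path $\pi$ recording the sequence of rules applied, and the search halts and returns $\pi$ upon first reaching $\psi$.

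For \textbf{soundness}, the plan is to argue by induction on the length of the returned path $\pi = [r_1,\dots,r_k]$. Each edge placed in the BFS frontier corresponds to a call $\phi_{\text{next}} \gets \text{apply}(r,\phi_{\text{cur}})$, and by construction \texttt{apply} fires a rule $r$ only when its graphical $d$-separation side condition holds in the appropriate mutilated graph ($G_{\overline{X}}$, $G_{\overline{X}\underline{Z}}$, or $G_{\overline{X}\,\overline{Z(W)}}$). Since each do-calculus rule and each probability rule in $\mathcal{R}$ is an equivalence-preserving transformation whenever its side condition is met, every edge $\phi_i \to \phi_{i+1}$ along $\pi$ certifies $\phi_i \vdash_G \phi_{i+1}$; composing these over the chain yields $\phi \vdash_G \psi$, so any returned $\pi$ is a valid derivation. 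For \textbf{completeness} of the bounded search, the plan is the standard BFS invariant: after processing all paths of length $\le t$, every expression reachable from $\phi$ by a derivation of length $\le t$ has been enqueued and marked visited. Hence if $\phi \vdash_G \psi$ via some shortest derivation of length $k$, then provided $d \ge k$ the target is dequeued and matched at depth $k$, and BFS in fact returns a \emph{shortest} such sequence. Conversely, if no derivation of length $\le d$ exists, the queue empties after exhausting all depth-$\le d$ nodes and the algorithm returns \texttt{None}; termination here follows because finite branching plus the depth cutoff bounds the number of enqueued nodes.

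For the final clause, I would remove the depth guard and rely on cycle detection via the visited set $\mathcal{V}$. The argument is that $\mathcal{L}_{\text{causal}}$ is finite (by \Cref{thm:graph}), and the visited set guarantees each expression is enqueued at most once; therefore the search enqueues at most $|\mathcal{L}_{\text{causal}}|$ nodes and must terminate. Unbounded BFS then explores the \emph{entire} reachable component of $S(\phi)$, so it finds $\psi$ iff $\psi$ lies in that component, i.e. iff $\phi \vdash_G \psi$. This gives completeness without a depth assumption, while soundness carries over verbatim from the per-edge argument above. Finally, to connect derivability to semantic correctness I would invoke the soundness-and-completeness of do-calculus established earlier, so that $\phi \vdash_G \psi \iff \mathcal{H} \models_G \psi$, closing the gap between the proof-search result and semantic equivalence.

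The main obstacle I anticipate is the \emph{per-edge soundness} step rather than the BFS bookkeeping: one must verify that each transformation in $\mathcal{R}$ as actually implemented (including the probability rules $\mathcal{P}$ and the subtle $Z(W)$ ancestor restriction in Rule~\ref{eq:rule3}) is genuinely equivalence-preserving under its fired side condition, and that the canonical normalization used to define node identity does not silently merge inequivalent expressions nor split equivalent ones. A secondary subtlety is ensuring that ``shortest path'' and completeness are not broken by the visited-set pruning—one must confirm that marking a node visited on first discovery cannot discard a genuinely shorter route, which holds precisely because BFS discovers nodes in nondecreasing distance order.
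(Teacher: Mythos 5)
Your proposal is correct and follows essentially the same route as the paper's proof: soundness by observing that every edge the algorithm creates comes from a rule in $\mathcal{R}$ fired only when its $d$-separation precondition holds (the paper phrases this contrapositively, enumerating the three ways a returned path could fail to be a valid derivation, where you use a direct per-edge induction), bounded completeness via the standard BFS level-order invariant over the finite-branching derivation graph of \Cref{thm:graph}, and unbounded completeness from the finiteness of canonical expressions over $V$ together with the visited set forcing termination. The only cosmetic difference is your closing appeal to do-calculus completeness to connect $\vdash_G$ with semantic entailment, which the paper treats as a separate property of the logical system rather than part of this theorem.
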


We first show that \DoVerifier is sound.

\begin{tcolorbox}[
  colback=white,
  colframe=gray,
  boxrule=0.4pt,
  arc=2pt,
  left=6pt,
  right=6pt,
  top=6pt,
  bottom=6pt,
  breakable,
  fonttitle=\bfseries
]
\begin{proof}
Fix a causal DAG $G$ and let 
$\phi, \psi \in \mathcal{L}_{\text{causal}}$ 
denote the initial and target expressions, respectively. 
Assume for contradiction that \DoVerifier is not sound. Then there exists 
some proof path 
\[
\pi = \langle \phi_1, \dots, \phi_n \rangle
\]
returned by the algorithm such that $\pi$ is not a valid derivation from 
$\phi$ to $\psi$. 

By definition, a valid derivation from $\phi$ to $\psi$ is a finite sequence 
$\langle \phi_1, \ldots, \phi_n \rangle$ with $\phi_1 = \phi$ and $\phi_n = \psi$, 
such that for all $i < n$, $\phi_{i+1}$ is obtained from $\phi_i$ by a valid 
transformation rule admissible under $G$. Hence, any sequence that is not 
a valid derivation must violate at least one of the following conditions:
\begin{enumerate}
    \item The path does not start at the initial expression, i.e., 
    $\phi_1 \neq \phi$.
    \item The path does not end at the target expression, i.e., 
    $\phi_n \neq \psi$.
    \item There exists some $i$ such that 
    $\phi_{i+1}$ is not derivable from 
    $\phi_i$ by any valid transformation rule admissible under $G$.
\end{enumerate}

We now show that none of these cases can occur by construction:
\begin{itemize}
    \item The algorithm initializes the search frontier with 
    $\{\phi\}$, so every returned path must begin with $\phi_1=\phi$.
    Hence, Case~(1) cannot occur.

    \item The algorithm terminates only when an expression syntactically 
    identical to $\psi$ is reached, hence $\phi_n=\psi$.
    Hence, Case~(2) cannot occur.

    \item The algorithm expands nodes exclusively by applying rules from 
    $\mathcal{R}$ (do-calculus and standard probability rules), and each 
    rule is applied only when its graphical and syntactic preconditions 
    (e.g., $d$-separation in $G$) are satisfied.
    Hence, Case~(3) cannot occur.
\end{itemize}

Thus, every returned path is a valid derivation from $\phi$ to $\psi$, 
contradicting our assumption. Therefore, \DoVerifier is sound.
\end{proof}
\end{tcolorbox}
We now prove completeness, first by showing under the case where there is a depth bound $d$.
\begin{tcolorbox}[
  colback=white,
  colframe=gray,
  boxrule=0.4pt,
  arc=2pt,
  left=6pt,
  right=6pt,
  top=6pt,
  bottom=6pt,
  breakable,
  fonttitle=\bfseries
]
\begin{proof}
Suppose $\phi \vdash_G \psi$. By definition of $\vdash_G$, there exists a
finite sequence of rule applications forming a path in $S(\phi)$ from
$\phi$ to $\psi$. Let the length of the shortest such sequence be $d^\star$.

By \Cref{thm:graph}, $S(\phi)$ is finite-branching. Since
\Cref{algo:verification} performs breadth-first search, it explores all
expressions reachable from $\phi$ in increasing order of path length.
In particular, with depth bound $d$, \Cref{algo:verification} enumerates
every node in $S(\phi)$ at graph distance at most $d$ from $\phi$ before
returning \texttt{None}.

Therefore:
\begin{itemize}
    \item If $d \ge d^\star$, then $\psi$ lies at distance at most $d$ from
    $\phi$ in $S(\phi)$, so \Cref{algo:verification} will reach $\psi$ and
    return a valid proof sequence.

    \item If $d < d^\star$, then no path of length at most $d$ from $\phi$ to
    $\psi$ exists in $S(\phi)$, and \Cref{algo:verification} correctly returns
    \texttt{None}.
\end{itemize}

Hence, \Cref{algo:verification} is complete up to depth $d$.
\end{proof}
\end{tcolorbox}
We further consider the unbounded-depth setting with cycle detection.
\begin{tcolorbox}[
  colback=white,
  colframe=gray,
  boxrule=0.4pt,
  arc=2pt,
  left=6pt,
  right=6pt,
  top=6pt,
  bottom=6pt,
  breakable,
  fonttitle=\bfseries
]
\begin{proof}
Assume that the algorithm maintains a global visited set and discards any
successor expression that has been previously encountered (after canonical
normalization), thereby preventing revisiting states and eliminating cycles.
By construction, all expressions are maintained in canonical normalized form
and range over finitely many subsets of $V$ (\Cref{appendix:details}). Since
$V$ is finite, the set of canonical well-formed expressions over $V$ is finite,
and hence the derivation graph $S(\phi)$ contains finitely many vertices.

The algorithm performs breadth-first search on $S(\phi)$ while expanding each
reachable vertex at most once. Consequently, every vertex reachable from $\phi$
is eventually explored after finitely many expansion steps.

If $\psi$ is reachable from $\phi$ in $S(\phi)$, the algorithm will therefore
visit $\psi$ and return a valid derivation. Otherwise, no such derivation
exists. Hence, \DoVerifier is complete in the unbounded-depth setting.
\end{proof}
\end{tcolorbox}
\section{Practical Considerations}
\label{appendix:considerations}
\begin{restatable}[Complexity]{fact}{bfscompleteness}
\label{thm:bfscompleteness}
The time complexity of BFS is $\mathcal{O}(b^d)$ where $b$ is the maximum branching factor and $d$ is the depth limit.
\end{restatable}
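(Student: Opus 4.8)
The plan is to prove this by a standard level-by-level counting of the BFS frontier, followed by a geometric-series bound. I would first observe that \Cref{algo:verification} explores $S(\phi)$ in nondecreasing order of path length, so it is natural to partition the explored nodes by their depth (graph distance from the source $\phi$). The root $\phi$ sits at depth $0$, and by \Cref{thm:graph} every vertex of $S(\phi)$ has out-degree at most $b$, the maximum branching factor. Hence at most $b^k$ distinct expressions are enqueued at depth $k$, which I would establish by an immediate induction: depth $0$ contains a single node, and each node at depth $k-1$ generates at most $b$ successors, so depth $k$ contains at most $b \cdot b^{k-1} = b^k$ nodes.

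Summing over all levels up to the depth bound $d$, the total number of nodes the algorithm ever dequeues is at most $\sum_{k=0}^{d} b^k$. For $b \ge 2$ this geometric sum equals $\frac{b^{d+1}-1}{b-1} \le 2\,b^{d}$, so the node count is $\mathcal{O}(b^d)$. I would then account for per-node work: each dequeued expression triggers a goal test against $\psi$, the enumeration of at most $b$ rule applications, and for each candidate successor a membership test in the visited set $\mathcal{V}$ together with the associated $d$-separation checks. Since the variable set $V$ of $G$ is fixed, each of these operations costs an amount bounded independently of $b$ and $d$; thus the total work is within a constant factor of the number of nodes plus the number of edges, and the edge count is at most $b$ times the node count, i.e.\ $\mathcal{O}(b^{d+1})$, which collapses to $\mathcal{O}(b^d)$ after absorbing the factor $\tfrac{b}{b-1}\le 2$.

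The only point requiring care—rather than a genuine obstacle—is the degenerate case $b = 1$, where the geometric sum is not dominated by its final term: there the node count is merely $d+1$, so one should either state the result under the usual convention $b \ge 2$ or note that the linear-in-$d$ bound is subsumed once a constant branching factor is fixed. I would also flag that the bound counts abstract node expansions, with the problem-dependent cost of $d$-separation testing and canonical normalization treated as constants relative to the asymptotic parameters $b$ and $d$; this matches the convention under which textbook BFS complexity is stated and is consistent with the finite-branching guarantee of \Cref{thm:graph}.
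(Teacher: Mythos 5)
The paper itself states this Fact without any accompanying proof (it is presented as a standard result in \Cref{appendix:considerations}), so there is no official derivation to compare against; your level-by-level counting is the canonical textbook argument, and its core is sound. The induction giving at most $b^k$ expressions at depth $k$, the geometric-series bound $\sum_{k=0}^{d} b^k \le 2b^d$ for $b \ge 2$, and the observation that per-node work (goal test, visited-set membership, $d$-separation checks) is bounded once $V$ is fixed are all correct, as is your care with the degenerate case $b=1$ and your appeal to \Cref{thm:graph} for finite branching.

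The one step that fails as written is the final edge accounting. You bound the number of edges by $b$ times the node count, obtaining $\mathcal{O}(b^{d+1})$, and then assert this ``collapses to $\mathcal{O}(b^d)$ after absorbing the factor $\tfrac{b}{b-1}\le 2$.'' That factor cannot do the work you need: $b^{d+1} = b \cdot b^d$, and since $b$ is an asymptotic parameter here (the branching factor of $S(\phi)$ grows with $|V|$ and the rule set $\mathcal{R}$, not a fixed constant), $\mathcal{O}(b^{d+1})$ is genuinely larger than $\mathcal{O}(b^d)$; the factor $\tfrac{b}{b-1}\le 2$ only justifies the geometric-sum bound, not the exponent reduction. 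The repair is immediate and uses a feature of \Cref{algo:verification} you already invoked: the guard $|\pi| < d$ means only nodes at depth at most $d-1$ are ever expanded, so the number of generated edges is at most
\begin{align}
b \sum_{k=0}^{d-1} b^k \;\le\; 2\,b^{d},
\end{align}
and the total work is $\mathcal{O}(b^d)$ as claimed. With that substitution, your proof is complete and matches the standard convention under which the Fact is stated.
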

While theoretically sound, practical implementations must consider several optimizations:

\begin{enumerate}
    \item \textbf{Expression normalization} to avoid revisiting equivalent states (e.g., removing redundant conditions, standardizing variable order)
    \item \textbf{Efficient d-separation testing} for determining rule applicability
    \item \textbf{Memoization} of independence tests to avoid redundant graph operations
    \item \textbf{Bidirectional search} from both $\phi$ and $\psi$ to reduce the effective search depth
\end{enumerate}

These optimizations preserve the theoretical guarantees while making the approach computationally feasible for practical use in evaluating causal reasoning in language models.

\section{Data Samples of Synthetic Data}
\label{appendix:data_samples}
To support the evaluation of causal inference methods, we construct synthetic datasets using directed acyclic graphs (DAGs) that encode assumed causal relationships among variables. Each DAG consists of nodes representing variables and directed edges representing direct causal influences. These graphs serve as the basis for simulating both observational and interventional data.

The data samples are designed to validate symbolic derivations using 
do-calculus. Each example contains:
\begin{itemize}
    \item A \textbf{DAG} representing the underlying causal structure.
    \item A pair of probability expressions $(\phi, \psi)$, where 
    $\phi$ is an interventional expression involving $\mathrm{do}(\cdot)$ 
    operators and $\psi$ is an equivalent or simplified observational 
    expression.
    \item A proof sequence specifying the ordered applications of 
    do-calculus rules (Rule~\ref{eq:rule1}, Rule~\ref{eq:rule2}, 
    Rule~\ref{eq:rule3}) used to transform $\phi$ into $\psi$.
\end{itemize}

These synthetic samples are not drawn from real-world distributions, but 
they adhere strictly to the independence constraints implied by the DAGs, 
ensuring the theoretical correctness of all derivations.

\section{Prompt Examples}
\label{appendix:prompt}
To evaluate and guide language model performance on causal reasoning tasks, we designed a two-shot prompt that consists of: A set of instructions, two fully worked examples, a new query prompt for the model to solve in the same format.
\begin{lstlisting}
## Instructions:
1. For each problem, identify the correct expression that represents the query
2. Draw the graphical representation as a text description of edges
3. Show your mathematical reasoning step by step
4. Provide a final yes/no answer
5. Keep your response concise and focused on the solution

## Examples:

Example 1:
Prompt: Imagine a self-contained, hypothetical world with only the following conditions, and without any unmentioned factors or causal relationships: Poverty has a direct effect on liking spicy food and cholera. Water company has a direct effect on liking spicy food. Liking spicy food has a direct effect on cholera. Poverty is unobserved. The overall probability of liking spicy food is 81%
Let V2 = water company; V1 = poverty; X = liking spicy food; Y = cholera

Expression: P(Y | X)
Graphical Representation: V1->X,V2->X,V1->Y,X->Y
Reasoning: P(X = 1, Y = 1)/P(X = 1) - P(X = 0, Y = 1)/P(X = 0)
P(X=1) = 0.81
P(Y=1, X=0) = 0.13
P(Y=1, X=1) = 0.17
0.17/0.81 - 0.13/0.19 = -0.44
-0.44 < 0
Final Answer: No

Example 2:
Prompt: Imagine a self-contained, hypothetical world with only the following conditions, and without any unmentioned factors or causal relationships: Poverty has a direct effect on liking spicy food and cholera. Water company has a direct effect on liking spicy food. Liking spicy food has a direct effect on cholera. Poverty is unobserved. For people served by a local water company, the probability of cholera contraction is 64%
Let V2 = water company; V1 = poverty; X = liking spicy food; Y = cholera.

Expression: E[Y | do(X = 1)] - E[Y | do(X = 0)]
Graphical Representation: V1->X,V2->X,V1->Y,X->Y
Reasoning: E[Y | do(X = 1)] - E[Y | do(X = 0)]
[P(Y=1|V2=1)-P(Y=1|V2=0)]/[P(X=1|V2=1)-P(X=1|V2=0)]
P(Y=1 | V2=0) = 0.64
P(Y=1 | V2=1) = 0.66
P(X=1 | V2=0) = 0.50
P(X=1 | V2=1) = 0.45
(0.66 - 0.64) / (0.45 - 0.50) = -0.39
-0.39 < 0
Final Answer: Yes

## Your Task:
Solve the following problem using the format above. Begin your response with "Solution:" and provide only the expression, graphical representation, reasoning, and final answer. 
Prompt: {description}
\end{lstlisting}

\section{Formalization Quality} 
\label{appendix:formalization}
In the CLadder setting, the formalization step is relatively stable because the benchmark uses a very constrained format. Variables are binary, expressions follow a fixed schema, and expectations can be rewritten as probabilities through a simple proof. Importantly, CLadder itself supplies a canonical mapping from natural-language event descriptions to variable symbols in the first step of each item’s reasoning field. We follow this provided mapping directly, so no additional interpretation or heuristic conversion is introduced on our side. Because the resulting expressions lie in a fixed and regular language, simple regex-based parsing is sufficient, and in our testing we did not observe parsing errors for well-formed items. A small number of CLadder instances contain \texttt{NaN} values in the ground-truth expressions; in these cases, the benchmark’s own symbolic representation is incomplete, and we skip only these specific items.

\section{Alternative Metrics}
\label{appendix:alternative_metrics}

\begin{table*}[h]
\centering
\begin{tabular}{lcc}
\toprule
\textbf{Model} & \textbf{BLEU} & \textbf{Token-level F1} \\
\midrule
Llama-3.1-8B-Instruct \cite{grattafiori2024llama3herdmodels} & 0.46 & 0.70 \\
Mistral-7B-v0.1 \cite{jiang2023mistral7b}                    & 0.33 & 0.58 \\
Llama-3.1-8B \cite{grattafiori2024llama3herdmodels}          & 0.36 & 0.57 \\
Gemma-7b-it \cite{gemmateam2024gemmaopenmodelsbased}         & 0.19 & 0.55 \\
\bottomrule
\end{tabular}
\caption{Average BLEU and token-level F1 scores for each model evaluated on CLadder.}
\label{tab:alternatives}
\end{table*}

\begin{table*}[t]
\centering
\begin{tabular}{
    llcc
}
\toprule
\textbf{LLM Output} & \textbf{Formal Label} & \textbf{Correct?} & \textbf{BERTScore F1} \\
\midrule
\texttt{P(Y | V1)} & \texttt{P(Y | X)} & No & 0.91 \\
\texttt{P(Y)} & \texttt{P(Y | X)} & No & 0.91 \\
\bottomrule
\end{tabular}
\caption{Incorrect model outputs with high BERTScore. While these expressions differ from the gold standard, BERTScore assigns high similarity, demonstrating its over-generosity in causal evaluation.}
\label{tab:bertscore}
\end{table*}

Evaluation of causal expression generation has often relied on surface-level metrics such as exact string match, BLEU score, BERTscore, and token-level F1.

\paragraph{BLEU and Token-level F1 Fails for Causal Evaluation}
BLEU computes precision over $n$-grams between a candidate and reference string. In causal reasoning, it suffers from
\begin{description}
    \item[Small expression length bias:] Causal expressions are often short; hence, BLEU becomes unstable when evaluating $<10$ token strings since higher-order $n$-grams vanish.
    \item[Syntactic Fragility:] Expressions that are semantically equivalent but have different variable order get penalized.
    \item[Non-semantic penalties:] BLEU may still reward inclusion of irrelevant variables if they overlap with the gold string, even if the overall expression is wrong.
\end{description}
Token-level F1 computes overlap between tokens, treating the expression as a bag of symbols. However, it still leads to multiple failure cases:
\begin{description}
    \item[Ignores structure role of variables:] F1 cannot distinguish $P(Y)$ from $P(Y \mid X)$ or $P(Y\mid \doC(X))$. They all share some subset of overlapping tokens and will inflate the accuracy.
    \item[No notion of well-formedness:] Syntactically invalid expressions such as $P(X \ Y)$ or $Y\mid P(X)$ might have high F1 if they reuse common symbols despite being invalid.
    \item[No semantics:] Conditioning vs intervention is completely ignored, a model can be rewarded for guessing the right letters, not the right logic. 
\end{description}
\Cref{tab:alternatives} shows the average BLEU and token-level F1 score for each model evaluated on causal language tasks. We see that both BLEU and F1 lack a formal grounding in the semantics of causal inference.  There is no transformation set $\mathcal{T}$ under which they define an equivalence class. In contrast, our symbolic verifier defines:
\begin{align}
    \phi_1 \equiv_G \phi_2 \iff \phi_1 \vdash_G \phi_2 \land \phi_2 \vdash_G \phi_1
\end{align}
Thus, BLEU and F1 may disagree with formal correctness, and worse, may systematically overestimate the validity of incorrect outputs.

Furthermore, it was surprising to see token-level F1 perform worse than string match at first. In symbolic probability expressions, operators such as the conditioning bar ($\mid$) or the intervention operator $\doC(\cdot)$ may be tokenized jointly with adjacent variables depending on the tokenizer (e.g., $\mid Z$ versus separate tokens $\mid$ and $Z$). As a result, semantically equivalent expressions such as $P(Y \mid Z, X)$ and $P(Y \mid X, Z)$ can exhibit reduced token overlap when token boundaries differ, leading to lower token-level F1 scores despite identical semantics.
\paragraph{BERTScore Failure Cases}
BERTScore~\cite{zhang2020bertscoreevaluatingtextgeneration:} is a widely used 
metric that computes semantic similarity by aligning contextualized token 
embeddings from a pretrained BERT model. It is often promoted as a 
semantically aware alternative to BLEU. However, in the context of causal 
reasoning, BERTScore exhibits a distinct failure mode: it confuses lexical 
proximity with logical validity.

\Cref{tab:bertscore} shows common failure cases where BERTScore assigns high 
similarity scores to expression pairs that are not causally equivalent. Let 
$\phi_{\text{pred}}, \phi_{\text{gold}} \in \mathcal{L}_{\text{causal}}$ be 
causal expressions encoded as strings. BERTScore computes
\begin{align}
\text{BERTScore}(\phi_{\text{pred}}, \phi_{\text{gold}})
=
\text{F1}_{\text{BERT}}
\big(
h_{\phi_{\text{pred}}}, 
h_{\phi_{\text{gold}}}
\big),
\end{align}
where $h_{\phi}$ denotes contextual token embeddings produced by a pretrained 
BERT model.

Crucially, BERTScore has no access to causal semantics, independence 
structure, or the formal syntax of do-calculus. Tokens such as 
\texttt{P}, \texttt{(}, and \texttt{)} are close in embedding space regardless 
of their role in a logical formula. As a result, BERTScore assigns high 
similarity scores to expressions that are semantically disjoint under the 
causal graph.

Unlike \DoVerifier, BERTScore does not satisfy a soundness guarantee:
\begin{align}
\text{BERTScore}(\phi_{\text{pred}}, \phi_{\text{gold}}) > 0.9
\;\not\Rightarrow\; \nonumber \\
\phi_{\text{pred}} \equiv_G \phi_{\text{gold}}.
\end{align}

This failure mode is particularly concerning in high-stakes settings, where 
plausible-looking causal statements may lead to incorrect conclusions when 
evaluated using BERTScore. While one could plot ROC curves for 
similarity-based metrics using symbolic derivability as ground truth, such an 
analysis is fundamentally misaligned: BERTScore is not designed to approximate 
causal validity, as it ignores graph structure, interventions, and admissible 
rewrite rules. For this reason, we focus on qualitative failure cases that 
directly expose these semantic mismatches.

\section{Feedback Prompts}
\label{app:feedback_prompts}
In \Cref{sec:improving_llms}, verifier-guided feedback is implemented as a
\emph{single diagnostic string} produced by \texttt{suggest\_fix()} from the
predicted DAG and the model-generated causal expression. Concretely, the LLM
is re-prompted with the same original problem context and its prior expression,
plus one of the following messages verbatim (depending on which condition is
triggered).

\paragraph{Diagnostic feedback strings (verbatim).}
\begin{itemize}
    \item \texttt{"All observed variables are d-separated from \{Y\} in the interventional graph. Consider using P(\{Y\}) instead — no conditioning is necessary."}
    \item \texttt{"\{z\} is a mediator between a cause and \{Y\}. Avoid conditioning on \{z\} to prevent post-treatment bias."}
    \item \texttt{"\{z\} causes \{Y\}, but is only observed. Consider using do(\{z\}) if you intend an intervention."}
    \item \texttt{"Conditioning on \{z\} may bias results; \{Y\} is not d-separated from \{z\} given \{W\}."}
\end{itemize}

Here, $\{Y\}$ denotes the outcome variable in the current expression, $\{z\}$
ranges over observed conditioning variables, and $\{W\}$ denotes the remaining
conditioning set used in the d-separation check.

\section{Bounded Search}
\label{appendix:middle_ground}

A practical middle ground is to keep the verifier \emph{sound} (i.e., only
applying valid rewrite rules) while guiding expansion using heuristics learned
from data. Let $S(\phi)$ denote the derivation graph of expressions reachable
from $\phi$ under rule set $\mathcal{R}$. Semantic entailment is defined as
\begin{align}
\phi \models \psi 
\;\;\Longleftrightarrow\;\; 
\psi \in S(\phi).
\end{align}
Let
\begin{align}
\mathrm{Path}(S(\phi)) \triangleq \{ \pi \mid \pi \text{ is a derivation in } S(\phi) \}.
\end{align}

A breadth-first search with resource bound $b$ explores the subset of
derivations
\begin{align}
S_{\le b}(\phi)
 \triangleq
\{ \pi \in \mathrm{Path}(S(\phi)) \mid \mathrm{depth}(\pi) \le b \},
\end{align}
which induces the truncated distribution
\begin{align}
P_b(\pi \mid \phi) 
=
\frac{\mathbb{I}[\pi \in S_{\le b}(\phi)]}
{|S_{\le b}(\phi)|}.
\end{align}

More generally, let $\mathcal{R}(\phi)$ denote the set of rules applicable at 
expression $\phi$. A locally normalized rule-selection policy
\begin{align}
P(r \mid \phi) 
= 
\frac{w_r}{\sum_{r' \in \mathcal{R}(\phi)} w_{r'}}
\quad (r \in \mathcal{R}(\phi))
\end{align}
induces a distribution over derivations. We represent a derivation $\pi$ as a
sequence of state--rule pairs $(\phi,r)$ corresponding to successive rule
applications, so that
\begin{align}
P(\pi \mid \phi) 
= 
\prod_{(\phi,r)\in \pi} P(r \mid \phi).
\end{align}

For any choice of weights $\{w_r\}$ and any finite bound $b$, the induced 
derivability relation $\vdash_{\mathcal{R}}^b$ is sound with respect to 
$\models$, and satisfies
\begin{align}
\vdash_{\mathcal{R}}^{b} 
&\subseteq 
\vdash_{\mathcal{R}}^{b+1}
\;\subseteq\; 
\vdash_{\mathcal{R}}, \quad
\lim_{b \to \infty} 
\vdash_{\mathcal{R}}^{b} 
= 
\vdash_{\mathcal{R}} .
\end{align}

\section{Frequently asked questions}
\paragraph{What problem does \DoVerifier actually solve?}
\DoVerifier addresses the gap between surface-form evaluation of causal reasoning in LLM outputs (e.g., string match, BLEU, BERTScore) and semantic correctness under causal inference rules. It checks whether a model's predicted causal expression is formally derivable from a given causal graph using do-calculus and probability rules, recovering correct answers that naive metrics miss.

\paragraph{Does \DoVerifier require the ground truth answer?}
For evaluation, yes - the framework needs the correct expression to compare against. However, for feedback and self-correction, it can operate without the ground truth by checking the model's answer against the DAG and suggesting corrections.

\paragraph{Can't we just use the ID algorithm by \citet{JMLR:v9:shpitser08a} to see if both are identifiable, and then compare?}
There are cases when the expressions are unidentifiable but can be simplified such as
\begin{align*}
    \phi &= P(Y \mid \doC(X), \doC(W), Z) \\
    \psi &= P(Y \mid \doC(X), Z)
\end{align*}
under specific DAGs, which can be easily constructed to satisfy do-calculus rule 3.

\paragraph{How is \DoVerifier different from Lean or other proof assistants?}
Lean is a general-purpose formal proof assistant used to verify mathematical theorems in a wide range of domains. It requires users to construct complete proofs in a formal language.
\DoVerifier is a domain-specific verifier for causal inference. It operates only on causal expressions, uses a fixed set of rules from do-calculus and probability theory, and performs automated proof search to determine equivalence between expressions given a causal graph. Users do not supply the proof steps; the system infers them automatically.

\end{document}